\documentclass[acmsmall,table]{acmart}

\usepackage{amsmath,amsthm,amsfonts}
\usepackage{algorithmic}
\usepackage{textcomp}
\usepackage{bm}
\usepackage{graphicx}
\usepackage{subcaption}
\usepackage{siunitx}

\usepackage{mathtools}

\DeclareMathOperator*{\diag}{diag}

\usepackage{lipsum}

\newtheorem{theorem}{Theorem}[section]
\newtheorem{prop}[theorem]{Proposition}
\newtheorem{definition}[theorem]{Definition}
\newtheorem{remark}[theorem]{Remark}
\newtheorem{assumption}{Assumption}

\AtBeginDocument{%
  }

\setcopyright{acmlicensed}
\copyrightyear{2025}
\acmYear{2025}
\acmDOI{10.1145/3725857}

\acmJournal{JACM}
\acmVolume{16}
\acmNumber{3}
\acmArticle{68}
\acmMonth{5}

\begin{document}

%%
%% The "title" command has an optional parameter,
%% allowing the author to define a "short title" to be used in page headers.
\title{A Gated Graph Neural Network Approach to Fast-Convergent Dynamic Average Estimation}

%%
%% The "author" command and its associated commands are used to define
%% the authors and their affiliations.
%% Of note is the shared affiliation of the first two authors, and the
%% "authornote" and "authornotemark" commands
%% used to denote shared contribution to the research.
\author{Antonio Marino}
%\authornote{Both authors contributed equally to this research.}
\email{antonio.marino@irisa.fr}
\orcid{1234-5678-9012}
\affiliation{%
  \institution{Universitè de Rennes, CNRS, Inria, IRISA}
  \city{Rennes}
  \country{France}
}

\author{Claudio Pacchierotti}
\affiliation{%
	\institution{CNRS}
	\city{Rennes}
	\country{France}
}
\email{claudio.pacchierotti@cnrs.fr}

\author{Paolo Robuffo Giordano}
\affiliation{%
	\institution{CNRS}
	\city{Rennes}
	\country{France}
}
\email{prg@cnrs.fr}

%%
%% By default, the full list of authors will be used in the page
%% headers. Often, this list is too long, and will overlap
%% other information printed in the page headers. This command allows
%% the author to define a more concise list
%% of authors' names for this purpose.
\renewcommand{\shortauthors}{Marino et al.}

%%
%% The abstract is a short summary of the work to be presented in the
%% article.
\begin{abstract}
 Dynamic average estimation is a critical problem in multi-agent systems, enabling agents to collaboratively estimate time-varying signals using only local information exchange. Traditional model-based approaches often face challenges related to convergence speed and sensitivity to network topology changes. This paper introduces a novel learning-based solution leveraging Gated Graph Neural Networks (GGNNs) for fast-convergent dynamic average estimation in a fully distributed manner. Taking advantage of the inherent structure of GGNNs, the proposed method models the estimation process as a distributed autoregressor, ensuring rapid convergence while maintaining stability. We incorporate a regularization term during training to enforce convergence guarantees and introduce an encoding-decoding mechanism to reduce communication overhead without sacrificing accuracy compared to standard GGNNs. Extensive numerical experiments demonstrate that our approach significantly outperforms conventional model-based estimators in terms of both convergence speed and precision, making it a promising alternative for multi-agent applications that require dynamic average estimation.
\end{abstract}

%%
%% The code below is generated by the tool at http://dl.acm.org/ccs.cfm.
%% Please copy and paste the code instead of the example below.
%%
\begin{CCSXML}
	<ccs2012>
	<concept>
	<concept_id>10003752.10010070.10010071.10010082</concept_id>
	<concept_desc>Theory of computation~Multi-agent learning</concept_desc>
	<concept_significance>300</concept_significance>
	</concept>
	<concept>
	<concept_id>10010147.10010178.10010219.10010220</concept_id>
	<concept_desc>Computing methodologies~Multi-agent systems</concept_desc>
	<concept_significance>500</concept_significance>
	</concept>
	<concept>
	<concept_id>10010147.10010919.10010172</concept_id>
	<concept_desc>Computing methodologies~Distributed algorithms</concept_desc>
	<concept_significance>300</concept_significance>
	</concept>
	</ccs2012>
\end{CCSXML}

\ccsdesc[300]{Theory of computation~Multi-agent learning}
\ccsdesc[500]{Computing methodologies~Multi-agent systems}
\ccsdesc[300]{Computing methodologies~Distributed algorithms}
%%
%% Keywords. The author(s) should pick words that accurately describe
%% the work being presented. Separate the keywords with commas.
\keywords{dynamic average estimation, distributed algorithms, graph neural networks}

\received{}
\received[revised]{}
\received[accepted]{}

%%
%% This command processes the author and affiliation and title
%% information and builds the first part of the formatted document.
\maketitle

\section{Introduction}
Within a communication network consisting of multiple agents, a prevalent challenge faced by each agent involves the localized tracking of dynamic average values derived from time-varying signals dispersed across the group. When the signals are not available to the single agent, addressing this problem necessitates the use of distributed dynamic average estimation algorithms~\cite{kia2019tutorial}. These algorithms rely solely on local information exchange and are applicable regardless of the size of the group. Dynamic average estimation finds applications in various multi-agent scenarios, including formation control~\cite{porfiri2007tracking, listmann2009consensus}, distributed estimation~\cite{olfati2005distributed, wang2017diffusion}, connectivity control~\cite{robuffo2013passivity} and distributed optimization~\cite{wang2011control, zhu2011distributed}. To illustrate a practical application, consider a multi-robot system where a team of autonomous drones needs to estimate and track the average wind speed across a distributed environment. Each drone measures local wind conditions and shares information with its neighbours using a dynamic average estimation. This allows the system to converge to an accurate global estimate, preferably despite network topology changes, enabling the drones to optimize their flight paths in real-time. Similar applications can be found in smart grids, where multiple sensors estimate real-time power consumption to improve load balancing and efficiency without centralized computation.

State-of-the-art approaches rely on consensus algorithms with special initialization requirements and signal derivative knowledge or integral action for robustness. Other methods focus on handling dynamic network topologies but face issues like chattering and slow convergence. In general, existing methods fail to reach fast convergence while remaining robust to graph topology variations and adapted for static and dynamic signals. Recently, graph neural networks (GNNs) have shown good performances to distributed consensus, though they require significant communication overhead. To address these challenges, we propose a novel learning-based approach using Gated Graph Neural Networks (GGNNs) to achieve fast-convergent dynamic average estimation in a fully distributed manner. By leveraging the native structure of GGNNs, our method ensures that agents can predict the dynamic average using a trained neural model while maintaining stability and convergence guarantees.

The key contributions of this work are as follows:
\begin{itemize}
    \item We introduce a GGNN-based learning model for dynamic average estimation that achieves rapid convergence while preserving stability.
    \item We formally analyze the stability properties of the proposed approach and introduce a regularization term to enforce convergence conditions during training.
    \item We incorporate an encoding-decoding mechanism to reduce communication overhead while maintaining estimation accuracy.
    \item We validate our approach through extensive numerical experiments, demonstrating superior performance compared to traditional model-based estimators in terms of convergence speed and precision.
\end{itemize}

\section{Related Work}
The average consensus estimation (ACE) problem formalization and one of the earliest solutions were presented for the first time in~\cite{spanos2005dynamic}. Their algorithm initially estimates a static average by applying consensus on weighted-balanced graphs and then incorporates the local signal derivative in the estimation update. However, this algorithm suffers from some limitations, such as requiring special initialization for every link creation or drop and the knowledge of the signal derivative. Freeman et al.\cite{freeman2006stability} proposed an improved version of Spanos et al.'s algorithm, introducing an integral action that eliminates the need for special initialization of the estimation and enhances its robustness against bounded noise in agent signals. Furthermore, they relaxed the requirement for the signal derivative through an appropriate change of state variables. Although the algorithm in\cite{freeman2006stability} guarantees asymptotic convergence, its tracking performance depends on both the rates of reference signals and the connectivity of the communication graph. If the signal varies too quickly and/or the graph is too loosely connected, the estimated average tracking performance can easily deteriorate.

Several other approaches have been proposed to improve the performance of dynamic average estimation. For example, Chen et al.\cite{chen2012distributed} introduced a scheme that enhances tracking accuracy by utilizing local estimations and consensus protocols tailored to improve performance in dynamically changing environments. Similarly, Nosrati et al.\cite{nosrati2012dynamic} proposed a method focusing on tracking accuracy but requiring specific initialization steps to ensure convergence. These schemes, while effective in certain conditions, encounter challenges when network topology changes, such as when agents join or leave the network, necessitating reinitialization to prevent non-zero steady-state errors.

To mitigate these issues, robust solutions have been explored. Chen et al.\cite{chen2013robust} and Xu et al.\cite{xu2021robust} investigated approaches employing increasing gains and sign functions designed to handle variations in network topology more gracefully. These robust solutions aim to enhance the stability and reliability of dynamic average estimations even in the presence of network disturbances. However, a common drawback is the phenomenon of chattering, where the algorithm oscillates around the true average value, or slow convergence, which can significantly delay the estimation process. These limitations reduce the effectiveness of the algorithms in real-time applications where quick adaptation is crucial.

In response to these challenges, a novel method by Stamouli et al.~\cite{stamouli2019robust} utilizes prescribed functions to achieve convergence with specified performance criteria. This method provides a structured approach to ensuring that the convergence of the average estimation meets predefined standards. However, the practicality of this approach is somewhat limited, as it requires re-tuning when the reference signals change. This re-tuning process can be cumbersome and time-consuming, particularly in dynamic environments where signal changes are frequent.

Moreover, the prescribed functions approach demands a comprehensive understanding of the system dynamics and careful adjustment of the parameters to maintain optimal performance. Despite its potential for high accuracy, the need for constant re-tuning poses a significant barrier to its widespread application, especially in scenarios where the operating conditions are highly variable or unpredictable.

Sandryhaila et al.\cite{sandryhaila2014finite} were the first to demonstrate an interest in achieving finite-time consensus using graph filters. They introduced a finite impulse response (FIR) graph filter that was adjusted based on the spectral characteristics of the graph. However, this approach becomes impractical in distributed systems due to the requirement of knowing the structure of the graph Laplacian matrix and its spectral decomposition. In contrast, Iancu et al.\cite{iancu2021towards} suggested employing a \textit{graph neural network} (GNN) that addresses the limitations of linear graph filter approaches by offering enhanced adaptability to graphs of varying sizes and connectivity levels.

GNNs are used for prediction and analysis tasks on graphs. They provide a convenient topological representation for a wide range of multi-agent problems, including decision-making~\cite{I-Lingfei2022GNNFoundations}, flocking control~\cite{gama2021graph}, space coverage~\cite{li2021message}, and multi-robot path planning~\cite{tolstaya2021multi}. Even if promising, the neural network proposed in~\cite{iancu2021towards} uses many GNN layers, requiring communication of a considerable amount of variables, which depends on the number of GNN layers and the chosen neural network features.

This work proposes a distributed neural network based on Gated Graph Neural Networks (GGNNs)~\cite{ruiz2020gated} that allows solving the distributed average estimation problem regardless of the team size within a few iterations. We achieve this result by imposing the incremental convergence of the neural state and consequently of the average estimation. In this way, the problem of distributed average estimation can be formalized more naturally as a dynamic regressor without the use of many neural layers.

%%%%%%%%%%%%%%%%%%%%%%%%%%%%%%%%%%%%%%%%%%%%%%%%%%%%%%%%%%%%%%%%%%%%%%%%%%%%%%%%
\section{Preliminaries}
\label{Preliminaries}
Let $ \mathcal{G}=(\mathcal{V},\,\mathcal{E})$ be an undirected graph where $\mathcal{V} = \{v_1, \dots, v_N\}$ is the vertex set (representing the $ N$ agents in the group) and $ \mathcal{E}\subseteq \mathcal{V}\times \mathcal{V}$ is the edge set.  
Each edge ${ e_k=(i,\,j)\in\mathcal{E}}$ is associated with a weight $ w_{ij}=w_{ji} \geq 0$ such that $ w_{ij}>0$ if the agents $ i$ and $ j$ can interact and $ w_{ij}=0$ otherwise. 
As usual, we denote with $ \mathcal{N}_i=\{j\in\mathcal{V}|\;w_{ij}>0\}$ the set of neighbors of agent $ i$.
We also let $ \bm{A} \in \mathbb{R}^{N\times N}$ be the adjacency matrix with the off diagonal entries given by the weights $ w_{ij}$. 
%The incidence matrix of the graph is defined as $\bm{B} = [b_{ki}]$, where $b_{ki}$ is $\{w_{ij}, -w_{ij}\}$ respectively if the edge $e_k$ enters or leaves node $i$, and $b_{ki} = 0$ otherwise. 

Defining the degree matrix $ \bm{D} =\diag(d_i)$ with $ d_i = \sum_{j\in \mathcal{N}_i} w_{ij}$, the Laplacian matrix of the graph is $ \bm{L}=\bm{D}-\bm{A}$.
The graph signal $\bm{x} \in \mathbb{R}^N$, whose $i$-component $x_i$ is assigned to agent $i$, can be processed over the network by the following linear combination rule applied by each agent
\begin{equation}
	\boldsymbol\ell_i\bm{x} = \sum_{j\in \mathcal{N}_i} w_{ji} (x_i - x_j), 
	\label{eq:aggregation}
\end{equation}
where $ \boldsymbol\ell_i$ is the $ i$-th row of $ \bm{L}$. This process is also known as \textit{aggregation} in the graph signal processing literature. The aggregation can be operated by means of any support matrix $ \bm{S}$, e.g., Laplacian, adjacency matrix, and so forth, which respects the sparsity pattern of the graph.

Performing $ k$ repeated applications of $ \bm{S}$ on the same signal represents the aggregation of the $ k$-hop neighborhood information. In analogy with traditional signal processing, the application of $ \bm{S}$ can be used to define a linear graph filtering~\cite{P-Shuman2013SPG} that processes the multi-feature signal $ \bm{x} \in \mathbb{R}^{N \times G}$ with $ G$ features:
\begin{equation}
 H_{\bm{S}}(\bm{x}) = \sum_{k=0}^K \bm{S}^k \bm{x} \bm{H_k},
	\label{eq:filter}
\end{equation}
where the weights $ \bm{H_k} \in \mathbb{R}^{G \times F}$ define the filter that transforms $\bm{x}$ in a new graph signal of $F$ features. Note that ${ \bm{S}^k=\bm{S}(\bm{S}^{k-1})}$, so that it can be computed locally with repeated 1-hop communications between a node and its neighbors. Hence, the computation of $ H_{\bm{S}}$ is naturally distributed over the network nodes. 
\subsection{Graph Neural Network}
\label{P-GNN}
Although $ H_{\bm{S}}$ is simple to evaluate, it can only represent a linear mapping between input and output graph signals. GNNs increase the expressiveness of the linear graph filters by means of pointwise nonlinearities $ \rho$, e.g., tanh, ReLU etc., following a filter bank. Letting $ H_{\bm{S}l}$ be a bank of $ F_{l-1} \times F_l$ filters at layer $ l$, the GNN layer is defined as
\begin{equation}
	 \bm{x}_l = \rho(H_{\bm{S}l}(\bm{x}_{l-1})), \qquad  \bm{x}_{l-1} \in \mathbb{R}^{N \times F_{l-1}}.
\end{equation}
Starting by $ l=0$ with $ F_0$, the signal tensor $ \bm{x}_{l_n} \in \mathbb{R}^{N \times F_{l_n}}$ is the output of a cascade of $ l_n$ GNN layers. During the training, this model learns the graph filter weights.
\subsection{Gated Graph Neural Network}
\label{P-GGNN}
Recurrent models of GNNs can solve time-dependent problems. These models, similarly to recurrent neural networks (RNNs), are known as graph recurrent neural networks (GRNNs). GRNNs utilize memory to learn patterns in data sequences, where the data is spatially encoded within graphs, regardless of the team size of the agents. However, traditional GRNNs encounter challenges such as vanishing gradients, which are also found in RNNs. Additionally, they face difficulties in handling long sequences in space, where certain nodes or paths within the graph might be assigned more importance than others in long-range exchanges, causing imbalances in the graph's information encoding. \\
%This problem is connected to the Laplacian over smoothing which accumulates in time on the state temporal sequence. For example, in a graph with highly connected subgraphs, the aggregation~\eqref{eq:aggregation} will weight more the components of such subgraphs which in turn will gain ever greater importance on the state, dominating the other components. \\   
Forgetting factors can mitigate this problem, reducing the influence of past state or inputs. A Gated Graph Neural Network (GGNN)~\cite{ruiz2020gated} is a recurrent Graph Neural Network that uses gating mechanisms to control the information flow in the network. We add two gates, $\bm{\hat{q}}, \bm{\tilde{q}} \in Q \subseteq [0,1]^{N \times F}$, that are multiplied via the Hadamard product~$\circ$ by the state and the inputs of the network, respectively. These two gates regulate how much the past information or the input are used to update the network internal state. GGNNs admit the following state-space representation~\cite{marino2024input},
\begin{equation}
	\begin{cases}
		\tilde{\bm{q}} = \sigma(\tilde{A}_S(\bm{x}) + \tilde{B}_S(\bm{u}) +\hat{b}) \\
		\hat{\bm{q}} = \sigma(\hat{A}_S(\bm{x}) + \hat{B}_S(\bm{u}) +\tilde{b})  \\
		\bm{x}^+ = \sigma_c(\bm{\hat{q}}\circ A_S(\bm{x}) + \bm{\tilde{q}} \circ B_S(\bm{u}) +b)
	\end{cases}
	\label{system}
\end{equation}
with $\sigma(x) = \frac{1}{1+e^{-x}}$ being the logistic function, and $\sigma_c(x) = \frac{e^{x}-e^{-x}}{e^{x}+e^{-x}}$ the hyperbolic tangent.  $\hat{A}_S,\hat{B}_S$ are graph filters of the forgetting gate, $\tilde{A}_S,\tilde{B}_S$ are graph filters~\eqref{eq:filter} of the input gate, and $A_S$ and $B_S$ are the state graph filters~\eqref{eq:filter}. $\bm{\hat{b}},\bm{\tilde{b}},\bm{b} \in \mathbb{R}^{N \times F}$ are respectively the biases of the gates and the state built as $\bm{1}_N \otimes \bm{\textit{b}}$ with the same bias for every agent. We identify the induced $\infty$-norm as $||\cdot||_{\infty}$. We used the following notation for the filters in~\eqref{system}:
\begin{equation}
		\begin{aligned}
			S \triangleq [I, S,\dots, S^K ], & \quad A   \triangleq [A_0, \dots, A_K ]^T, & B  \triangleq [B_0, \dots, B_K ]^T, & \quad \tilde{A} \triangleq [\tilde{A}_0, \dots, \tilde{A}_K]^T, \\ \hat{A}  \triangleq [\hat{A}_0, \dots, \hat{A}_K]^T, & \quad \tilde{B}  \triangleq [\tilde{B}_0, \dots, \tilde{B}_K ]^T, & \hat{B} \triangleq [\hat{B}_0, \dots, \hat{B}_K ]^T
		\end{aligned}
		\label{eq:definitions}
\end{equation}
We consider the system under the following assumption
\begin{assumption} 
	\label{assumption1} 
	The input $\bm{u}$ is unity-bounded: $\bm{u} \in \mathcal{U} \subseteq [-1,1]^{N \times G}$ , i.e. $||\bm{u}||_{\infty} \leq 1$. Given two support matrices $ ||S_{1}(t)||_{\infty},$ $ ||S_{2}(t)||_{\infty}, \forall t \in \mathbb{Z}^+$ associated with two different graphs, they are bounded by the same $ ||\bar{S}||_{\infty}$.
\end{assumption}
\noindent This assumption is quite mild, since the input signal is usually normalized or is the result of other network layers with unitary output activation functions. Assumption~\ref{assumption1} can be met without any further restriction on the graph topology when the support matrix used is normalized, e.g. a normalized Laplacian ($ \bm{Ln} = \bm{D^{-1}L}$). \\
\begin{figure*}
	\centering
	\includegraphics[width=\linewidth]{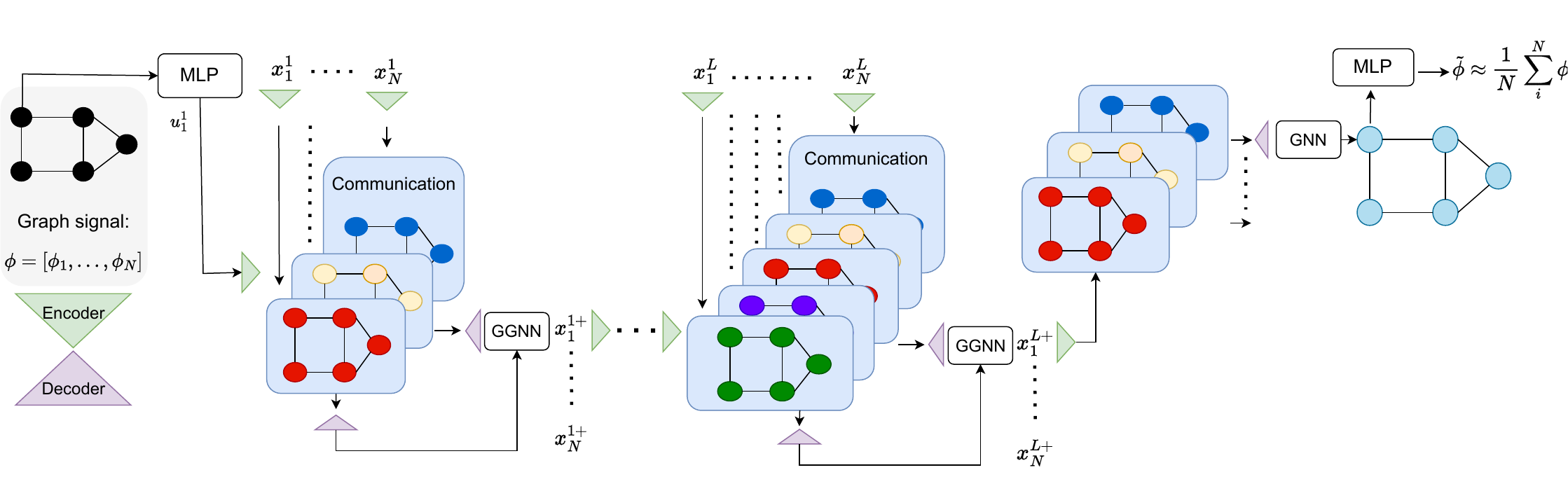}
	\caption{Proposed Graph Neural Network: each node $i$ own its own signal $ \phi_i$ that are fed into a MLP layer and elaborated by $l_n$ layers of gated graph neural layers. The nodes exchange the output features of the previous layers and the layer hidden states features. A final MLP (top right) computes the node average estimation.}
	\label{fig:network_structure}
\end{figure*}
We give a condition on the GGNN weights to achieve $ \delta$ISS.  The $ \delta$ISS property ensures that any pair of state trajectories converge towards each other even if they start from different initial conditions. $ \delta$ISS is defined as:
\begin{definition}[$\delta$ISS]
	\label{dISS_def}
	System~\eqref{system} is called incrementally input-to-state stable~\cite{bayer2013discrete} if there exist functions $\beta_{\delta} \in \mathcal{KL} $ and $\gamma_{\delta} \in \mathcal{K}_\infty$ such that, for any $t \in \mathbb{Z} \geq 0$, any initial states $\bm{x}(0)_1,\bm{x}(0)_2  \in \mathcal{X}$ any input sequences $\bm{u_1},\bm{u_2} \in \mathcal{U}$ it holds that:
	\begin{equation}
				||\bm{x}(t)_1 - \bm{x}(t)_2||_{\infty} \leq \beta_{\delta}(||\bm{x}(0)_1 - \bm{x}(0)_2 ||_{\infty},t) + \gamma_{\delta}(||\bm{u}_1 - \bm{u}_2||_{\infty})
	\end{equation}
\end{definition}
%\begin{remark}
%The use of the $\infty$-norm is more restrictive in terms of signal state amplitude since $||\bm{x}(t)||_{\infty} \leq ||\bm{x}(t)||_2$ but it less restricting for the input tensor which single elements can be larger
%\end{remark}
\begin{remark} 
	In the neural network context, the $\delta$ISS property ensures that any difference in the initial conditions will be eventually discarded, and thus the same outputs will correspond to the same observations. Moreover, since the stability is valid for $t>0$, for a training with a finite time sequence dataset it is guaranteed that all the NN state trajectories converge to a unique solution.
\end{remark}
In this work~\cite{marino2024input}, the author demonstrated
\begin{prop}
	\label{dISS_stab}
	Under Assumptions~\ref{assumption1}, a sufficient condition for the system~\eqref{system} to be $ \delta$ISS is $ \delta \mathcal{A} \leq 1$; where
	\begin{flalign}
				\delta \mathcal{A}  \triangleq \sigma_{\hat{q}}||\bar{S}_K||_\infty ||A||_{\infty}+\frac{1}{4}||\bar{S}_K||^2_\infty||\hat{A}||_{\infty} ||A||_{\infty}  + \frac{1}{4} ||\bar{S}_K||^2_\infty||\tilde{A}||_{\infty} ||B||_{\infty}.
	\end{flalign}
	with
	\begin{flalign}
			\begin{aligned}
				\sigma_{\hat{q}} & \triangleq \sigma( ||S_K||_{\infty}(|| \hat{A} ||_{\infty}+||\hat{B}||_{\infty})+||\hat{\bm{b}}||_\infty).  \\
				\sigma_{\tilde{q}} & \triangleq \sigma( ||S_K||_{\infty}(||\tilde{A}||_{\infty}+|| \tilde{B}||_{\infty}) +||\tilde{\bm{b}}||_\infty). 
			\end{aligned}
	\end{flalign}
\end{prop}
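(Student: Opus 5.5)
We will prove the condition by a contraction argument in the $\infty$-norm: we show that the one-step state map of~\eqref{system} is Lipschitz in $\bm{x}$ with constant $\delta\mathcal{A}$ and Lipschitz in $\bm{u}$ with some finite constant. $\delta$ISS then follows by iterating the inequality, provided the state constant is strictly below one (strict contraction gives a $\mathcal{KL}$ bound $\beta_\delta(r,t)=(\delta\mathcal{A})^t r$). In the boundary case $\delta\mathcal{A}=1$ we would need a slightly sharper estimate, for instance from the strict inequality $|\sigma_c'|<1$ on a bounded set.

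Fix two trajectories $(\bm{x}_1,\bm{u}_1)$ and $(\bm{x}_2,\bm{u}_2)$, both satisfying Assumption~\ref{assumption1}.

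\emph{Step 1: invariance.} Since $\sigma_c$ maps into $(-1,1)$, every state after the first step satisfies $\|\bm{x}\|_\infty\le 1$. The inputs also satisfy $\|\bm{u}\|_\infty\le1$. Writing each filter as $H_S(\bm{x})=S\,(I\otimes\bm{x})\,H$ with the stacked notation~\eqref{eq:definitions}, we get $\|H_S(\bm{x})\|_\infty\le\|\bar S_K\|_\infty\|H\|_\infty\|\bm{x}\|_\infty$. The gate arguments are therefore bounded, and monotonicity of $\sigma$ gives $\|\hat{\bm q}\|_\infty\le\sigma_{\hat q}$ and $\|\tilde{\bm q}\|_\infty\le\sigma_{\tilde q}$.

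\emph{Step 2: decompose the increment.} Since $\sigma_c$ is $1$-Lipschitz, we have $\|\bm{x}_1^+-\bm{x}_2^+\|_\infty\le\|\hat{\bm q}_1\circ A_S(\bm{x}_1)-\hat{\bm q}_2\circ A_S(\bm{x}_2)+\tilde{\bm q}_1\circ B_S(\bm{u}_1)-\tilde{\bm q}_2\circ B_S(\bm{u}_2)\|_\infty$. We add and subtract the cross terms:
\begin{equation*}
\hat{\bm q}_1\circ(A_S(\bm{x}_1)-A_S(\bm{x}_2))+(\hat{\bm q}_1-\hat{\bm q}_2)\circ A_S(\bm{x}_2),
\end{equation*}
and analogously for the input part. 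The Hadamard product satisfies $\|a\circ b\|_\infty\le\|a\|_\infty\|b\|_\infty$ entrywise.

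\emph{Step 3: bound each term.}
\begin{itemize}
\item The first term is bounded by $\sigma_{\hat q}\|\bar S_K\|_\infty\|A\|_\infty\|\Delta\bm{x}\|_\infty$.
\item For the second, $\sigma$ is $\tfrac14$-Lipschitz, so $\|\hat{\bm q}_1-\hat{\bm q}_2\|_\infty\le\tfrac14\|\bar S_K\|_\infty(\|\hat A\|_\infty\|\Delta\bm{x}\|_\infty+\|\hat B\|_\infty\|\Delta\bm{u}\|_\infty)$. Also $\|A_S(\bm{x}_2)\|_\infty\le\|\bar S_K\|_\infty\|A\|_\infty$ because $\|\bm{x}_2\|_\infty\le1$. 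This yields the $\tfrac14\|\bar S_K\|^2_\infty\|\hat A\|_\infty\|A\|_\infty$ term plus a $\Delta\bm{u}$ contribution.
\item The input gate difference contributes, in the same way, $\tfrac14\|\bar S_K\|^2_\infty\|\tilde A\|_\infty\|B\|_\infty$ times $\|\Delta\bm{x}\|_\infty$, using $\|\bm{u}_2\|_\infty\le1$.
\item The remaining pieces are proportional to $\|\Delta\bm{u}\|_\infty$, with constant $\gamma=\sigma_{\tilde q}\|\bar S_K\|\|B\|+\tfrac14\|\bar S_K\|^2(\|\hat B\|\|A\|+\|\tilde B\|\|B\|)$.
\end{itemize}
Altogether $\|\Delta\bm{x}^+\|_\infty\le\delta\mathcal{A}\|\Delta\bm{x}\|_\infty+\gamma\|\Delta\bm{u}\|_\infty$.

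\emph{Step 4: iterate.} Iterating gives $\|\Delta\bm{x}(t)\|\le(\delta\mathcal{A})^t\|\Delta\bm{x}(0)\|+\gamma\sum_k(\delta\mathcal{A})^k\sup\|\Delta\bm{u}\|$. This has the form required by Definition~\ref{dISS_def}, with $\gamma_\delta(r)=\gamma r/(1-\delta\mathcal{A})$.

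\emph{Main obstacle.} The delicate step is Step 3's use of state boundedness $\|\bm{x}\|_\infty\le1$ inside the cross terms, together with the induced-norm bound for the stacked filters under time-varying supports. This is where Assumption~\ref{assumption1}'s uniform bound $\|\bar S\|$ enters. If the initial state is not in $[-1,1]^{N\times F}$, we would start the argument from $t=1$ and absorb the first step into $\beta_\delta$.
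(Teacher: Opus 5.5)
Your proof is correct and follows the same argument the paper relies on: the Proposition is taken from its GGNN stability reference, and the same template is reproduced with encoders/decoders in the proof of Theorem~\ref{dISS_ed_stab}. Both bound the gates by monotonicity of $\sigma$, split $\bm{x}_1^+-\bm{x}_2^+$ by adding and subtracting cross terms, use the $\tfrac14$-Lipschitz sigmoid and nonexpansive $\sigma_c$ with $\|\bm{x}\|_\infty,\|\bm{u}\|_\infty\le 1$, and sum a geometric series; the paper additionally lets the two trajectories run on different supports, which contributes an input-like term $\mathcal{W}\|S_{K1}-S_{K2}\|_\infty$. Like the paper, you only obtain $\delta\mathcal{A}<1$, and your suggested patch for $\delta\mathcal{A}=1$ would not come from a uniform bound $|\sigma_c'|\le c<1$, since $\sigma_c'(0)=1$; one would instead need the nonlinear estimate $|\sigma_c(a)-\sigma_c(b)|\le 2\sigma_c(|a-b|/2)<|a-b|$ together with a generally non-exponential $\mathcal{KL}$ bound.
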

\begin{prop}
	\label{deep-delta-stability}
	deep GGNN architectures composed by $l_n$ layers are $ \delta$ISS if every $ i$-th layer satisfies the Theorem~\ref{dISS_stab}.
\end{prop}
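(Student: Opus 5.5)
The plan is to argue by induction on the depth, using the fact that $\delta$ISS is preserved under series (cascade) interconnection. Write the deep GGNN as a cascade of layers: layer $1$ is driven by the external input $\bm{u}$, and layer $l$ is driven by the state $\bm{x}_{l-1}$ of layer $l-1$ at the same time step. Each layer has the form~\eqref{system} with its own weights. By hypothesis each layer satisfies $\delta\mathcal{A}_l \leq 1$, so Proposition~\ref{dISS_stab} applies to it. That gives $\beta_l \in \mathcal{KL}$ and $\gamma_l \in \mathcal{K}_\infty$ with
\[
\|\bm{x}_{l,1}(t)-\bm{x}_{l,2}(t)\|_\infty \leq \beta_l(\|\bm{x}_{l,1}(0)-\bm{x}_{l,2}(0)\|_\infty,t)+\gamma_l(\|\bm{x}_{l-1,1}-\bm{x}_{l-1,2}\|_\infty),
\]
where the input term is the sup-norm of the input difference over the horizon.

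First I need to check that Assumption~\ref{assumption1} holds for every inner layer, since Proposition~\ref{dISS_stab} needs unity-bounded inputs. The state update uses $\sigma_c=\tanh$, so $\bm{x}_{l-1}\in[-1,1]^{N\times F_{l-1}}$ for every $t\geq 1$, and also at $t=0$ if the initial state is taken in $[-1,1]$. Hence $\|\bm{x}_{l-1}\|_\infty\leq 1$ is an admissible input for layer $l$. The support-matrix bound is common to all layers because they share the same $\bar S$. If there is an input MLP as in Fig.~\ref{fig:network_structure}, it is a static Lipschitz map. It only rescales $\gamma_1$ by its Lipschitz constant, provided its output is bounded in $[-1,1]$, for example through a tanh output.

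Next comes the inductive step. Suppose the stack of layers $1,\dots,l-1$ is $\delta$ISS with functions $\bar\beta_{l-1}$ and $\bar\gamma_{l-1}$. Substitute that bound into layer $l$'s estimate. To keep a genuine $\mathcal{KL}$ decay in the input term, apply layer $l$'s estimate from time $t/2$ to $t$, rather than from $0$ to $t$. This is the standard cascade argument (as in Angeli's $\delta$ISS cascade result and its discrete-time version in~\cite{bayer2013discrete}). The input contribution is then bounded by $\gamma_l$ of the sup over $[t/2,t]$ of $\bar\beta_{l-1}(\cdot,s)+\bar\gamma_{l-1}(\|\bm{u}_1-\bm{u}_2\|_\infty)$. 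Use the weak triangle inequality $\gamma(a+b)\leq\gamma(2a)+\gamma(2b)$ to split it. This yields a $\mathcal{KL}$ term in the joint initial difference evaluated at $t/2$, plus $\gamma_l(2\bar\gamma_{l-1}(\cdot))\in\mathcal{K}_\infty$. The state at $t/2$ is bounded by $\beta_l(\cdot,0)$ plus input terms, so composing gives a $\mathcal{KL}$ function of $\|\bm{x}(0)_1-\bm{x}(0)_2\|_\infty$, where $\bm{x}$ is now the stacked state of the first $l$ layers. Finally, take the max over the layer components of the stacked state in $\infty$-norm, which is compatible with the per-layer $\infty$-norms.

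The main obstacle I expect is the time-splitting step. The naive substitution yields a term $\gamma_l(\sup_s\bar\beta_{l-1}(r,s))=\gamma_l(\bar\beta_{l-1}(r,0))$, which does not decay in $t$. The fix is to restart layer $l$'s estimate at time $\lfloor t/2\rfloor$, using the time-invariance of~\eqref{system}, so that the upstream $\mathcal{KL}$ term is evaluated only at times $s\geq t/2$. One then has to confirm that the resulting functions are still of class $\mathcal{KL}$ and $\mathcal{K}_\infty$. Alternatively, because all states are confined to the compact set $[-1,1]$, I could try the simpler contractive route: show that each layer's estimate in~\cite{marino2024input} is actually exponential. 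In that case the cascade composition is immediate, with geometric sums.
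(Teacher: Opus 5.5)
Your proposal is correct. The paper gives no proof of its own for this proposition; it imports it from \cite{marino2024input} together with Proposition~\ref{dISS_stab}, and your layer-by-layer cascade argument is the standard route to it: use the $\tanh$-boundedness of the state of layer $l-1$ to meet Assumption~\ref{assumption1} for layer $l$, then apply the time-splitting proof that a series interconnection of $\delta$ISS systems is $\delta$ISS. The one step to state more carefully is the restart at $\lfloor t/2\rfloor$: since the support matrix $S(t)$ may vary in time, justify it by causality and by the per-layer bound holding uniformly over all admissible sequences $S(t)$ (equivalently, treat $S$ as part of the exogenous input, as the paper does with $\|S_{K1}-S_{K2}\|_\infty$ in Appendix~\ref{proof_theorem1}), rather than by literal time-invariance of~\eqref{system}.
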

\noindent Moreover, with non-instantaneous communication, the neural network is still $ \delta$ISS under the same conditions~\cite{marino2024input}. The performance at deployment can be different from the one at the training time. However, given a reasonable sampling time for practical applications, like $ 0.01s$, the sequence of support matrices will not present drastic changes as their spectral characteristics are similar, thus stable to graph perturbations~\cite{marino2024input}.

\section{Problem Formulation}
\label{sec:problem_formulation}
Consider a multi-agent network group composed by $ N$ agents forming a communication graph $ \mathcal{G}$. Each agent $ i$ has a local bounded and continuous scalar reference signal $ \phi_i: \mathbb{R} \xrightarrow{} \mathbb{R}$, output of a sensor or another algorithm. In the dynamic average consensus problem, each agent must track the average of the agents signals $ \phi_i$ with the tracking objective given by:
\begin{equation*}
	\bar{\phi}(t) = \frac{1}{N}\sum_{i=1}^N \phi_i (t) 
\end{equation*}
with $ t \in \mathcal{Z}^{+}$ being the time iteration. Since the agents know only their signal, the tracking objective requires communication among the agents. The information sharing can be modeled as a function $ f$ whose iterative application provides the dynamic average estimation:
\begin{equation}
 \tilde{\phi}_i(t+1) = f(\phi_i(t), \tilde{\phi}_i(t), \{ \phi_j(t), \tilde{\phi}_j(t) | j \in \mathcal{N}_i \})
\end{equation}
with $ \tilde{\phi^i}(t)$ the current average estimation of the $ i$-th agent at iteration t. The function $ f$ can be approximated by an artificial neural network that captures the graph structure and its properties, described in the next Section. 

\section{Model}
\label{sec:model_training}
We assume that the underlying graph of the multi-agent system is connected throughout the estimation and, consequently, the average tracking is possible. 

Each agent possesses an instance of the neural network illustrated in Figure~\ref{fig:network_structure}. The network is fed with the agent's local signal. Initially, the input signal is transformed into a new feature space consisting of $ G$ features. This transformation is achieved by employing a multi-layer perceptron (MLP) utilizing a hyperbolic tangent activation function. The purpose of this step is to enable the network to operate in a more suitable space, which is modeled during the training process.
Following the initial processing, the transformed signal undergoes processing through $l_n$ layers of GGNNs with $F$ state features and a filter length of $K$. Moreover, an additional GNN layer with $Fl$ features and a filter length of $Kl$ is included to aggregate the states from the preceding layers. Subsequently, the features represented by $Fl$ are projected back into the original signal space using a final MLP, resembling the architecture used for processing the input. In Figure~\ref{fig:network_structure}, the GGNN states, represented by vector $\bm{x}$, act as a memory for the current average estimation, effectively shaping an autoregressive estimator. GGNNs are well-suited for distributed signal tracking due to their inherently distributed and dynamic nature, aligning closely with established literature solutions to the problem~\cite{kia2019tutorial, george2019robust}.   

As a support matrix, we opted for the normalized Laplacian, which is a commonly used choice in the graph neural network literature \cite{I-Lingfei2022GNNFoundations}. We denote the neural network with normalized Laplacian with \textit{GNN}. To improve the convergence results, we evaluate the use of an attention mechanism. Given the sparsity communication patterns, this mechanism leaves the decision of the link weights to the training process based on the data to communicate. In particular, each weight is updated by this rule  
\begin{equation}
	a_{ij} = \frac{exp(\rho(W[x_i | x_j]))}{\sum_{j \in \mathcal{N}_i} exp(\rho(W[x_i | x_j]))} 
\end{equation}
with $\rho$ being the LeakyRelu function, $W$ learning parameters and $\cdot | \cdot$ the operator to stack two variables in a single vector. The new weights of the support matrix can be computed as
\begin{equation}
		\begin{aligned}
			w_{ij} &= -a_{ij}  \quad \text{for} \quad j \in \mathcal{N}_i \\
			w_{ii} &= 1; 
		\end{aligned}
\end{equation}
In this way, the final support matrix has still zeros row sum while weighting more the nodes that contribute more to the desired estimation. In this solution, that we named \textit{GNNa}, we instantiate one attention mechanism for the state filters ($A_{S},\tilde{A}_S,\hat{A}_S$) and one for the input filters ($B_{S},\tilde{B}_S,\hat{B}_S$). 

One of the main challenges that compromise the use of GNNs for multi-agent applications is the amount of variables that are needed to communicate for obtaining accurate predictions. To understand this fact, we can use the unit-delay communication model~\cite{marino2024input} and communicate unit-time delayed signals to compute the graph filters in~\eqref{eq:filter} in one shot, by sending~${[x_i(t), \hdots, \boldsymbol\ell_i^{K-1}\bm{x}(t-K-1)]}$ for each agent, where $\boldsymbol\ell_i^{K-1}$ stands for $K-1$ repeated application of equation~\eqref{eq:aggregation}. This model allows releasing a new output at each communication iteration but at the cost of more communicated variables. For instance, consider the neural network depicted in Figure~\ref{fig:network_structure}. In that example, each agent transmits a total of ${KG+2KFl_n-KF}$ variables for the GGNN and an extra $KlF$ variables for the last GNN layer. Specifically, agents send $KG$ variables corresponding to the input features for the initial layer in the GGNN. Following that, each agent is obliged to transmit a set of $KF$ state feature variables for every one of the $l_n$ layers, along with an identical count of input variables for $l_n-1$ layers. 

Our solution to this drawback is to introduce an encoding-decoding mechanism before the communication. In this way, every filter becomes 
\begin{equation}
	H_{ed\bm{S}}(\bm{x}) = \sum_{k=0}^K d_{\theta}(\bm{S}^k e_{\theta}(\bm{x})) \bm{H_k},
	\label{eq:filter_at}
\end{equation}
letting ${d_{\theta}(\cdot):\mathbb{R}^{N \times F'} \rightarrow \mathbb{R}^{N \times F}}$ and ${ e_{\theta}(\cdot):\mathbb{R}^{N \times F} \rightarrow \mathbb{R}^{N \times F'}}$ respectively the decoding and encoding functions. In particular, $ F' << F$ is meant to reduce the number of variables to communicate per filter, calculated by replacing $G$ and $F$ with the smaller dimensions $G'$ and $F'$. 

By applying the Proposition~\ref{dISS_stab} to a GGNN without attention and encoding-decoding, we can guarantee stable estimation regardless of initial GGNNs state conditions and connectivity degree while imposing a convergence rate, as shown in Section~\ref{sec:model_training}. However, The introduced attention and encoding-decoding require extending the essential result in the Proposition~\ref{dISS_stab}.  We assume, without loss of generality, that the functions $d_{\theta}$ and $e_{\theta}$ are made of two MLPs with ReLU activation functions, but also other Lipschitz continuous activation functions are valid choices. The infinite norm of $e_{\theta}$ is 
\begin{equation}
		\begin{aligned}
			||e_{\theta}(\bm{x})||_{\infty} & \triangleq || \rho(\rho( \dots \rho(\bm{x}W_{0e} + \bm{b}_0) + \dots)W_{ke} + \bm{b_{ke}}) ||_{\infty}
			\\
			& \leq \Pi_{j=0e}^{ke}|| W_j ||_{\infty}||\bm{x}||_{\infty} + \Pi_{j=1e}^{ke}|| W_j ||_{\infty} ||\bm{b_{j-1}} ||_{\infty} + ||\bm{b_{kd}}||_{\infty} \leq E ||\bm{x}||_{\infty} + Eb
		\end{aligned} 
	\label{eq:inf_encoder}
\end{equation}
The weights $W_{je}$ and the biases $\bm{b}_{je}$ are the parameters of the encoding function to learn. In practice, the dimensions of these weights are task-dependent and are chosen to affect as little as possible the tracking. Similarly, the decoding function norm satisfies
\begin{equation*}
	||d_{\theta}(\bm{x})||_{\infty}  \leq D ||\bm{x}||_{\infty} + Db 
\end{equation*}
Thus, the filter infinite norm becomes  
\begin{equation}
		\begin{aligned}
			||H(\bm{x})||_{\infty} & \leq || [d_{\theta}(e_{\theta}(x)), d_{\theta}(Se_{\theta}(x)), \dots, d_{\theta}(S^K e_{\theta}(x))]||_{\infty}||H||_{\infty} \\
			& \leq [K D ||S_{K}||_\infty  (E ||\bm{x}||_{\infty}+Eb) + Db]||H||_{\infty}
		\end{aligned}
		\label{eq:inf_decoder} 
\end{equation}
with $ H = [H_0,H_1,\dots,H_K]^T$. In the following, we consider two different encoding-decoding pairs to communicate the states and the inputs, denoting the different functions respectively as $e_x,e_u$ and $d_x,d_u$.
\begin{theorem}
	\label{dISS_ed_stab}
	Under assumptions~\ref{assumption1}, a sufficient condition for the system~\eqref{system} with encoding and decoding functions to be $ \delta$ISS is $ \delta\mathcal{A}_{ed} < 1$; where
	\begin{flalign}
			\begin{aligned}
				\delta \mathcal{A}_{ed} & \triangleq K D_x ||\bar{S}_K||_\infty E_x[\sigma_{\hat{q}ed} ||A||_{\infty} + \frac{1}{4}||\hat{A}||_{\infty} ||A||_{\infty}(K D_x ||\bar{S}_K||_{\infty} (E_x  + Eb_x) + Db_x) \\ & + \frac{1}{4} ||\tilde{A}||_{\infty} ||B||_{\infty}(K D_u ||\bar{S}_K||_{\infty}(E_u + Eb_u) + Db_u )].
			\end{aligned}
	\end{flalign}
\end{theorem}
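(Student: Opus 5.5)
The plan is to follow the contraction argument behind Proposition~\ref{dISS_stab} (from~\cite{marino2024input}) and substitute the encoded--decoded filter bounds~\eqref{eq:inf_decoder} wherever the plain filter bounds were used. Fix two trajectories $\bm{x}_1(t),\bm{x}_2(t)$ driven by inputs $\bm{u}_1,\bm{u}_2\in\mathcal{U}$. Write $\Delta\bm{x}=\bm{x}_1-\bm{x}_2$ and $\Delta\bm{u}=\bm{u}_1-\bm{u}_2$, and let the gates $\hat{\bm q}_i,\tilde{\bm q}_i$ be computed with the filters $H_{ed\bm S}$ of~\eqref{eq:filter_at}. Since $\sigma_c$ is $1$-Lipschitz,
\[
\|\Delta\bm{x}^+\|_\infty \le \|\hat{\bm q}_1\circ A_S(\bm x_1)-\hat{\bm q}_2\circ A_S(\bm x_2)\|_\infty + \|\tilde{\bm q}_1\circ B_S(\bm u_1)-\tilde{\bm q}_2\circ B_S(\bm u_2)\|_\infty .
\]
Each Hadamard difference is split by adding and subtracting a mixed term:
\[
\hat{\bm q}_1\circ(A_S(\bm x_1)-A_S(\bm x_2)) + (\hat{\bm q}_1-\hat{\bm q}_2)\circ A_S(\bm x_2),
\]
and analogously for the input channel.

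Four ingredients then need to be bounded.

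\textbf{(i) Gate saturation.} We need $\|\hat{\bm q}\|_\infty\le\sigma_{\hat q ed}$. This uses the monotonicity of $\sigma$, Assumption~\ref{assumption1} ($\|\bm u\|_\infty\le1$ and $\|\bm x\|_\infty\le1$ from $\sigma_c$), and~\eqref{eq:inf_decoder} applied to the gate filters, which defines $\sigma_{\hat q ed}$.

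\textbf{(ii) Lipschitz bound of the encoded filter.} The key incremental estimate is
\[
\|A_S(\bm x_1)-A_S(\bm x_2)\|_\infty \le K D_x \|\bar S_K\|_\infty E_x\|A\|_\infty \|\Delta\bm x\|_\infty .
\]
To get it, use that $e_x,d_x$ are compositions of ReLU (1-Lipschitz) layers, so the incremental analogue of~\eqref{eq:inf_encoder} holds without bias terms, with constants $E_x$ and $D_x$. Biases cancel in differences, which is why the leading factor carries no $Eb$ or $Db$.

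\textbf{(iii) Gate increments.} Since $\sigma'\le 1/4$,
\[
\|\hat{\bm q}_1-\hat{\bm q}_2\|_\infty\le \tfrac14\big(\text{Lip of }\hat A_S\big)\|\Delta\bm x\|_\infty + \tfrac14\big(\text{Lip of }\hat B_S\big)\|\Delta \bm u\|_\infty ,
\]
where the Lipschitz constant of $\hat A_S$ is again $K D_x\|\bar S_K\|_\infty E_x\|\hat A\|_\infty$.

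\textbf{(iv) Magnitude bounds.} From~\eqref{eq:inf_decoder} with $\|\bm x_2\|_\infty\le1$,
\[
\|A_S(\bm x_2)\|_\infty\le (K D_x\|\bar S_K\|_\infty(E_x+Eb_x)+Db_x)\|A\|_\infty ,
\]
and analogously $\|B_S(\bm u_2)\|_\infty\le (K D_u\|\bar S_K\|_\infty(E_u+Eb_u)+Db_u)\|B\|_\infty$ using $\|\bm u_2\|_\infty\le 1$.

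Collecting the coefficients of $\|\Delta\bm x\|_\infty$ gives exactly $\delta\mathcal A_{ed}\|\Delta\bm x\|_\infty$, with the common factor $K D_x\|\bar S_K\|_\infty E_x$ pulled out. Everything else is multiplied by $\|\Delta\bm u\|_\infty$ and forms a constant $\delta\mathcal B_{ed}$. This yields
\[
\|\Delta\bm x(t+1)\|_\infty\le \delta\mathcal A_{ed}\|\Delta\bm x(t)\|_\infty+\delta\mathcal B_{ed}\|\Delta\bm u\|_\infty .
\]
Iterating with $\delta\mathcal A_{ed}<1$ gives $\beta_\delta(s,t)=\delta\mathcal A_{ed}^t s$ and $\gamma_\delta(s)=\delta\mathcal B_{ed}s/(1-\delta\mathcal A_{ed})$, which is Definition~\ref{dISS_def}.

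The strict inequality is needed here, unlike the non-strict form in Proposition~\ref{dISS_stab}, to make the geometric series in $\gamma_\delta$ finite. Time-varying support matrices are handled by the uniform bound $\|\bar S_K\|_\infty$ of Assumption~\ref{assumption1}.

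\textbf{Main obstacle.} I expect step (ii) to be the delicate part: checking that the ReLU encoder/decoder composition yields an incremental bound with the same constants $E,D$ as the magnitude bound~\eqref{eq:inf_encoder}, and that the factor $K$ in~\eqref{eq:inf_decoder} multiplies correctly through the stacked $[d(S^k e(\bm x))]_k$ representation. My first approach is to bound each block $\|d(S^ke(\bm x_1))-d(S^ke(\bm x_2))\|_\infty\le D\|S\|^k_\infty E\|\Delta\bm x\|_\infty$ and then sum over $k$, bounding the sum crudely by $K\|\bar S_K\|_\infty$ to match the stated form. For the attention variant, the row-stochastic attention weights keep $\|S\|_\infty$ bounded, so the same uniform $\|\bar S_K\|_\infty$ applies.
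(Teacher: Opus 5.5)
Your core estimate is the paper's: the same four-term Hadamard split, the gate bound $\sigma_{\hat q ed}$, the factor $\frac14$ from $\sigma'\le\frac14$, the Lipschitz and magnitude bounds of the encoded filters, and geometric iteration. Your collected coefficient reproduces $\delta\mathcal A_{ed}$ exactly. Step (ii), which you flag as the main obstacle, is routine. The paper's bookkeeping of the factor $K$ is just as crude; if anything, bound each block via $\|S^k\|_\infty\le\|\bar S_K\|_\infty$ rather than $\|S\|_\infty^k$.

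The real gap is in your last paragraph. You let both trajectories run through one common support matrix. Assumption~\ref{assumption1}, however, is formulated for two support matrices $S_1(t),S_2(t)$ of different graphs, and the paper's proof works with $S_{K1}\neq S_{K2}$ throughout. With attention this is unavoidable even on a single graph: $S_A$ is computed from the communicated features, so $\bm x_1\neq\bm x_2$ (or $\bm u_1\neq\bm u_2$) gives the two trajectories different support matrices. Estimate (ii) then fails as written, since
\begin{align*}
&S_{K1}(I_K\otimes e_x(\bm x_1))-S_{K2}(I_K\otimes e_x(\bm x_2))\\
&\quad=S_{K1}\bigl(I_K\otimes(e_x(\bm x_1)-e_x(\bm x_2))\bigr)+(S_{K1}-S_{K2})(I_K\otimes e_x(\bm x_2)),
\end{align*}
and the second piece, bounded by $\|S_{K1}-S_{K2}\|_\infty(E_x+Eb_x)$, is not proportional to $\|\Delta\bm x\|_\infty$. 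The same mismatch enters the gate increments in (iii) and the input channel. A uniform bound on $\|S\|_\infty$ controls magnitudes, not this mismatch.

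The paper does exactly this add-and-subtract and gathers every coefficient of $\|S_{K1}-S_{K2}\|_\infty$ into $\mathcal W_{ed}$. It then treats $\|S_{K1}-S_{K2}\|_\infty$ as an additional bounded input, ending with the extra term $(1-\delta\mathcal A_{ed})^{-1}\mathcal W_{ed}\|S_{K1}-S_{K2}\|_\infty$. This is also the only way the stated $\delta\mathcal A_{ed}$, which contains no attention constant, can cover the attention variant. Absorbing the mismatch into the $\|\Delta\bm x\|_\infty$ coefficient would require a Lipschitz constant for the attention map. As it stands, your argument establishes the theorem only for a common, possibly time-varying, support sequence that does not depend on the state.
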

We leave the proof in the Appendix~\ref{proof_theorem1}.

In addition to a direct comparison in experimental use cases reported in Section~\ref{sec:num_results}, we want to provide an intuition on why our architecture helps to reach faster convergence compared to the model-based techniques. 

To facilitate this analysis, we simplify our architecture by considering a single variable and a single unbiased layer of an attention-based GGNN with a graph filter length of 1. Additionally, we omit the principal non-linearity (\textit{tanh}) of the GGNN layer along with the input and output latent space transformations. This results in the following state-space form:
\begin{equation}
		\bm{x}^+ = \bm{\hat{q}}\circ \sum_{k=0}^{1} \bm{S_{A}}^{k}\bm{x} a_k + \bm{\tilde{q}} \circ \sum_{k=0}^{1} \bm{S_{A}}^{k}\bm{u} b_k
	\label{system-wtanh}
\end{equation}
Note that we denote $S_{A}$ the attention-based support matrix. In comparison, we can consider one of the most simple form of model-based average dynamic consensus~\cite{kia2019tutorial}
\begin{equation}
		\bm{x}^+ = \sum_{k=0}^{1} \bm{S}^{k}\bm{x} a_k + \sum_{k=0}^{1}\bm{S}^{k}\bm{u} b_k
	\label{classic-ace}
\end{equation}
where $S$ is the Laplacian matrix. Convergence can be ensured in both cases by setting $a_k < 1 \in \mathbb{R}$. The simplifications render the two systems structurally comparable, but two primary differences remain: the gates and the attention weights over the matrix $S$. With the same $a_k$, gates can dynamically regulate $a_k$ to enhance convergence since $\hat{q}$ changes as a function of the state within the range $[0,1]$. Therefore, in the worst-case scenario, the convergence matches that of~\eqref{classic-ace} in a single iteration. The attention mechanism, on the other hand, acts directly on the weighted graph, causing the neighboring contributions to the state iterations to be uneven compared to a standard Laplacian. This leads to varying convergence speeds across the agents. During training, the attention is optimized to minimize the convergence error in the least number of iterations, thereby reducing the training loss, as demonstrated in the subsequent section.              

\section{Training}
To train and validate the model, we constructed a dataset comprising graphs of various sizes, ranging from 4 to 25 nodes. These graphs exhibited different levels of connectivity and, for each node, we assigned static signals in a bounded set and recorded the corresponding averages. To ensure an adequate level of randomization for the graph connectivity and signal averages, we generated numerous variations of signals and graphs based on Erdős–Rényi model~\cite{erdHos1960evolution}. We bounded the signals between $[-1,1]$, assuming that in real scenarios we can normalize the local signal. This choice is validated numerically in the section~\ref{sec:num_results}. The dataset is available at the following link\footnote{https://tinyurl.com/44xpvce4}.

The dataset was then divided into three sets: training (70\%), validation (20\%), and test (10\%). Throughout the training process, the proposed model receives the training signals and graph Laplacian as inputs, which are repeated for a total of $T$ iterations. At each iteration, the model produces an average estimation and updates its internal state. To train the model, we employed the following cost function: 
\begin{equation}
			J  = \frac{1}{T} \sum_{t}^{T} || \tilde{\bm{\phi}}_t - \bm{1}\bar{\phi} ||_2^2  + \Pi, \quad
			\Pi  = \sum_{i=0}^{L} \text{Softplus}( \delta \mathcal{A}_{i} - 1)
		\label{eq:loss}
\end{equation}
where $\tilde{\bm{\phi}}$ is the vector whose elements are the average estimation of each agent and $\bm{1}$ is a row vector of ones. This loss term serves to guide the agent estimation toward the signal average. The second term $\Pi$ is the regularization term that guides the model to convergence by imposing the condition in the Proposition~\ref{dISS_stab} or Theorem~\ref{dISS_ed_stab} for the GGNNs internal stability with Softplus to have a smooth ReLU function. The Softplus can be regulated by its $\beta$ to enforce a higher contraction rate, $\delta \mathcal{A}_{i}$ below $1$, and consequently a fast convergence. In this way, $\Pi$ assumes the form of a soft constraint with hyperparameter $\beta$ that must be tuned to achieve a desirable convergence rate while keeping a good average estimation error. We chose $\beta = 10$.
\begin{remark}
	To enforce convergence to a specific value of $ \delta\mathcal{A} $, one could formulate the training process as an optimization problem with the hard constraint $ \delta\mathcal{A} < \epsilon < 1 $, as proposed in~\cite{d2022incremental}. However, without a precise understanding of the model's approximation capabilities, choosing a small $ \epsilon $ may significantly degrade the average tracking performance. In contrast, incorporating $ \Pi $ as a soft constraint allows for a balanced trade-off between minimizing the learning error and achieving a desirable convergence rate.

\end{remark}

 By adjusting the number of hidden state features ($F$) and the filter length ($K$), we can enhance the model's accuracy at the expense of increased communication, even if the model with encoding-decoding functions suffers less from this increase. In particular, the number of variables to communicate increases linearly with $F$ and $K$ for what was said in Section~\ref{sec:model_training}. In Tables~\ref{tab:comparison-F} and~\ref{tab:comparison-K}, we compare validation results for three different models: \textit{GNN}, \textit{GNNa}, and \textit{GNNa} with encoding-decoding mechanism (\textit{GNNa-ed}). For the latter, we compress both the state and the input features into $2$ features each thanks to the encoding function. The validation results indicate that a favorable compromise between accuracy and communication is achieved with $F=25$ and $K=2$ with ${\textit{GNNa-ed}}$ model. Notably, the best outcomes are obtained when using \textit{GNNa}, even if {\textit{GNNa-ed}} does show quite similar results. However, according to what already said in Section~\ref{sec:model_training}, \textit{GNNa} with $F=25$ communicates $250$ variables while \textit{GNNa-ed} with $F=25$ and encoding to $2$ features communicates $20$ variables. It is important to note that these results pertain to static graphs and signals.

Moreover, the same network design can be employed to estimate the average of multiple signals by increasing the input and output signal dimensions. This entails feeding the network with more than one signal simultaneously. Consequently, the network capabilities can be expanded while maintaining the same communication requirements as a single average estimation, albeit with relatively lower accuracy, as indicated in Table~\ref{tab:comparison-G}. The Regularization loss is in the range of $[1.27e-5,1.54e-5]$ corresponding to $\delta A \approx 0.1$.
\begin{table}[t]
	\begin{subtable}[t]{0.48\textwidth}
		\resizebox{\textwidth}{0.15\textwidth}{
			\begin{tabular}{l|lll}
				F  & $ GNN$ & $ GNNa$ & $ GNNa-ed$ \\ \hline
				16  & \cellcolor[HTML]{FFCCC9}1.8e-3+1.54e-5 & \cellcolor[HTML]{FFCCC9}0.83e-3+1.23e-5 & \cellcolor[HTML]{FFCCC9}0.84e-3 +1.34e-5      \\
				25 & \cellcolor[HTML]{FFCCC9}1.5e-3+1.24e-5 & \cellcolor[HTML]{DCFFA6}\textbf{0.32e-3}+1.21e-5 & \cellcolor[HTML]{DCFFA6}0.34e-3+1.30e-5         \\
				32 & \cellcolor[HTML]{FFCCC9}1.3e-3+1.45e-5 & \cellcolor[HTML]{DCFFA6}\textbf{0.28e-3}+1.27e-5 & \cellcolor[HTML]{DCFFA6}0.40e-3+1.31e-5
			\end{tabular}
		}
		\caption{}
		\label{tab:comparison-F}
	\end{subtable} 
	\begin{subtable}[t]{0.48\textwidth}
		\resizebox{\textwidth}{0.15\textwidth}{
			\begin{tabular}{l|lll}
				K  & $ GNN$ & $ GNNa$ & $ GNNa-ed$ \\ \hline
				1  & \cellcolor[HTML]{FFCCC9}1.9e-3+1.40e-5 & \cellcolor[HTML]{FFCCC9}0.95e-3+1.32e-5 & \cellcolor[HTML]{FFCCC9}0.99e-3+1.35e-5 \\
				2  & \cellcolor[HTML]{FFCCC9}1.5e-3+1.31e-5 & \cellcolor[HTML]{DCFFA6}\textbf{0.32e-3}+1.39e-5 & \cellcolor[HTML]{DCFFA6}{\color[HTML]{333333}0.34e-3+1.38e-5} \\
				3  & \cellcolor[HTML]{FFCCC9}{\color[HTML]{333333}1.4e-3+1.27e-5} & \cellcolor[HTML]{DCFFA6}\textbf{0.30e-3}+1.29e-5 & \cellcolor[HTML]{DCFFA6}0.33e-3+1.31e-5 
			\end{tabular}
		}
		\caption{}
		\label{tab:comparison-K}
	\end{subtable}
	\begin{subtable}[t]{0.48\textwidth}
		\resizebox{\textwidth}{0.15\textwidth}{
			\begin{tabular}{l|lll}
				G  & $ GNN$ & $ GNNa$  & $ GNNa-ed$  \\ \hline
				1  & \cellcolor[HTML]{FFCCC9}1.5e-3+1.33e-5 & \cellcolor[HTML]{DCFFA6}\textbf{0.32e-3}+1.28e-5 & \cellcolor[HTML]{DCFFA6}{\color[HTML]{333333}0.34e-3+1.30e-5} \\
				4  & \cellcolor[HTML]{FFCCC9}1.9e-3+1.31e-5 & \cellcolor[HTML]{DCFFA6}\textbf{0.45e-3}+1.25e-5 & \cellcolor[HTML]{DCFFA6}{\color[HTML]{333333}0.49e-3+1.28e-5} \\
				10 & \cellcolor[HTML]{FFCCC9}{\color[HTML]{333333}2.7e-3+1.32e-5} & \cellcolor[HTML]{FFCCC9}0.92e-3+1.37e-5 & \cellcolor[HTML]{FFCCC9}1.04e-3+1.28e-5    
			\end{tabular}
		}
		\caption{}
		\label{tab:comparison-G}
		
	\end{subtable}
	\caption{Validation + regulation error computed using eq~\eqref{eq:loss} given by training the neural network in Figure~\ref{fig:network_structure} with $ 2$ GGNN layers using left normalized Laplacian \textit{GNN}, with attention mechanism \textit{GNNa} and adding encoding-decoding \textit{GGNa-ed}, varying hidden features $ F$ fixing the filter length $K=2$, varying $K$ with $F=25$ and different input size ($ G$) with $K=2,F=25$. We highlighted in green the preferable solutions and in bold the best results.}
\end{table}
\section{Results}
\label{sec:num_results}
The learned neural network was tested in three different simulated scenarios: high-frequency signals (Sect~\ref{sec:high_frequencies_signals}), connectivity maintenance control (Sect~\ref{sec:connectivity_maintenance}), and formation control (Sect.~\ref{sec:formation_control}). These diverse settings allowed us to evaluate the transferability of the neural network across different scales, assess its performance in dynamic graph situations. For the first scenario, we numerically evaluate our apporach using random static graphs. For the other two scenarios, we conducted simulation experiments under non-instantaneous communication deployed with ROS to demonstrate the robustness of our approach in this aspect. In particular, we employed the already mentioned unit-delay communication model with ensured communication at T=$0.01s$. Moreover, we compare the approach proposed against three dynamic average estimators as baselines. The first one is the discrete \textit{PI-ACE}~\cite{kia2019tutorial}, commonly used in distributed control, which requires communication of two variables for estimation. The second one is the robust average estimator \textit{R-ACE}~\cite{george2019robust}, which is independent of state initialization under dynamic graphs. The third one is the more recent method prescribed-performance ACE ( \textit{PP-ACE})~\cite{stamouli2019robust}, designed to achieve fast estimation with high-frequency signals, requiring communication of only one variable for each average estimation. All three algorithms ensure asymptotic convergence under bounded signals with bounded derivatives.

\subsection{Hardware and Experimental Details}
\begin{figure*}[t]
	\begin{subfigure}[b]{0.45\textwidth}
		\centering
		\includegraphics[height=0.6\textwidth,width=\textwidth]{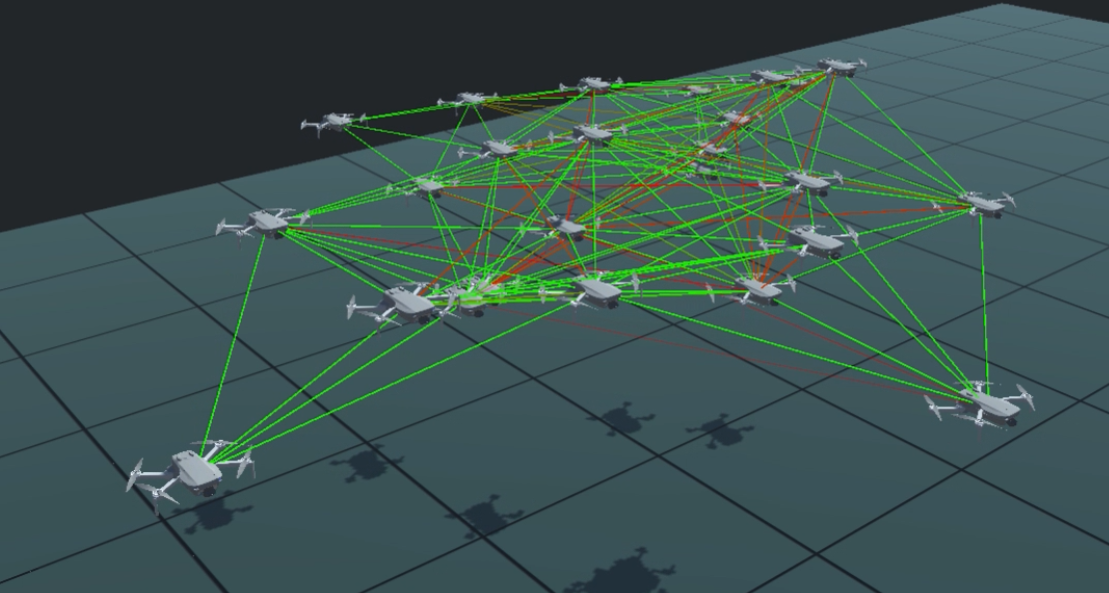}
		\caption{connectivity mantainance}
	\end{subfigure}
	\begin{subfigure}[b]{0.45\textwidth}
		\centering
		\includegraphics[height=0.6\textwidth,width=\textwidth]{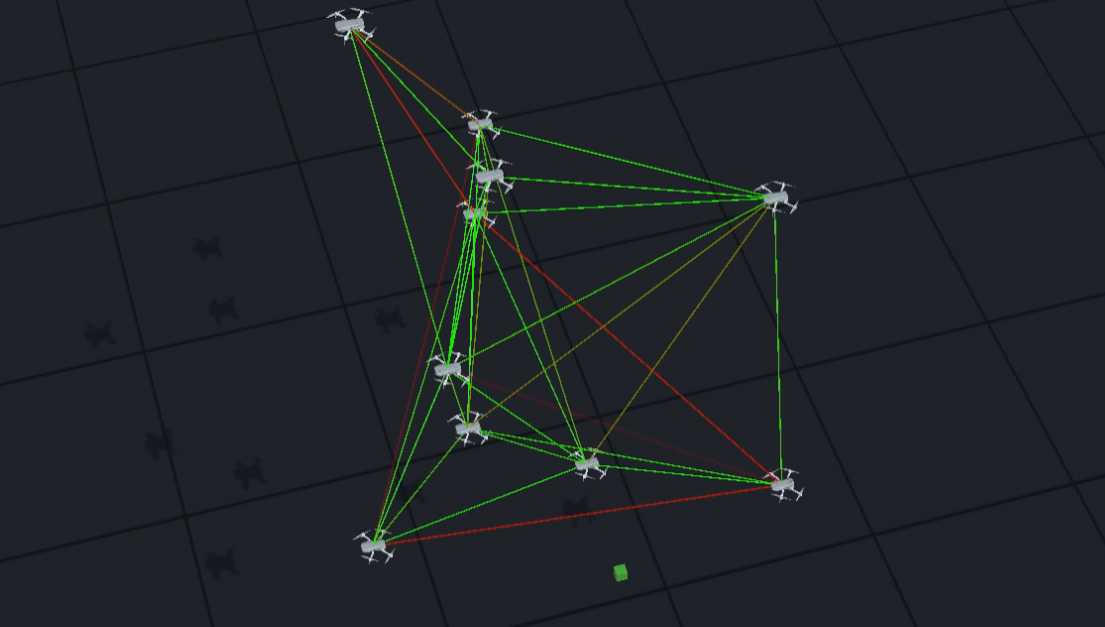}
		\caption{formation control}
	\end{subfigure}
	\caption{Test scenarios for connectivity mantainance (on the left) and formation control (on the right).}
	\label{fig:scenarios}
\end{figure*}

We train our neural model using PyTorch on a server running Ubuntu 22.04. with Intel Core i7-9750H @ 2.60GHz CPU, Nvidia RTX 2080Ti and 32G RAM. For testing the two scenarios of connectivity mantainance and formation control in figure~\ref{fig:scenarios}, we leveraged ROS for the communication among the agents with at a frequency of $100Hz$. We simulate the creation of a dynamic communication graph by employing a virtual proximity sensor which allows to select only the information of the agents close at a distance $\leq R$. Moreover, we also simulate the agent dynamics and the control strategy at $100Hz$.

\subsection{Static Signals}
\label{sec:static_signals}
\begin{figure*}[t]
	\begin{subfigure}[b]{0.325\textwidth}
		\centering
		\includegraphics[width=\textwidth, height=0.7\textwidth]{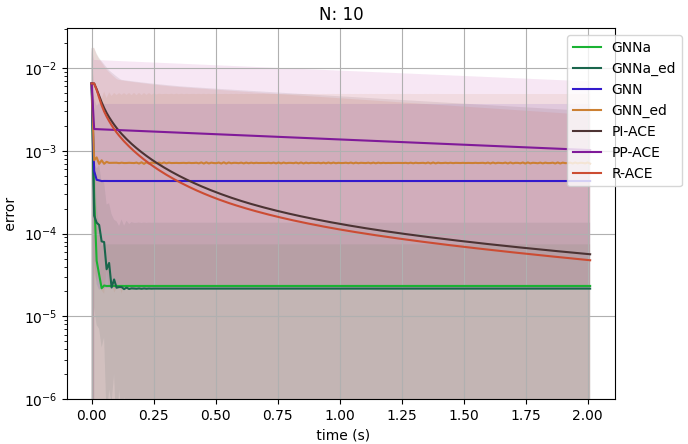}
		\caption{error for graph size of N=10}
		\label{fig:signals_10}
	\end{subfigure}
	\begin{subfigure}[b]{0.325\textwidth}
		\centering
		\includegraphics[width=\textwidth,height=0.7\textwidth]{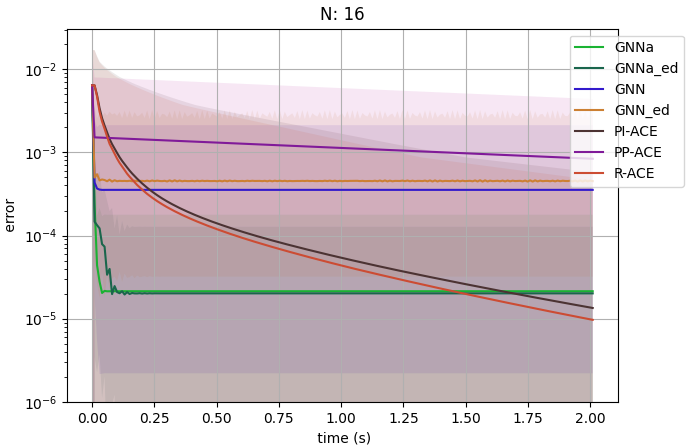}
		\caption{error for graph size of N=16}
		\label{fig:signals_16}
	\end{subfigure}
	\begin{subfigure}[b]{0.325\textwidth}
		\centering
		\includegraphics[width=\textwidth,height=0.7\textwidth]{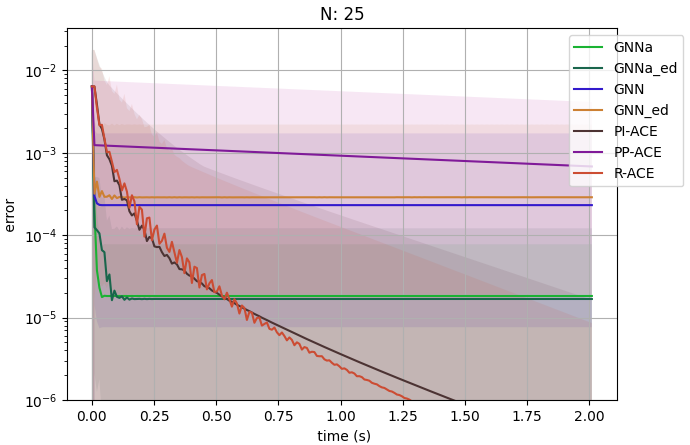}
		\caption{error for graph size of N=25}
		\label{fig:signals_25}
	\end{subfigure}
	\caption{\textbf{Static signals}. Mean and standard deviation error of the average estimation on $1000$ graphs and static signals for (a) $10$ agents, (b) $16$ agents, and (c) $25$ agents, applying GNN with attention mechanism (\textit{GNNa}), GNNa adding encoding-decoding (\textit{GNNa-ed}), GNN with left-normalized Laplacian (\textit{GNN}), \textit{GNN} adding encoding-decoding (\textit{GNN-ed}), \textit{PI-ACE}, \textit{PP-ACE}, and \textit{R-ACE}.}
	\label{fig:static_signals}
\end{figure*}

For this set of experiments, we evaluate the models on 1000 randomly generated graphs using static signals. The graphs contain $N=[10,16,25]$ nodes and are generated using the Erdős-Rényi method, consistent with the training distribution. As illustrated in Figure~\ref{fig:static_signals}, the neural network with attention mechanism (\textit{GNNa}) and its encoding-decoding variant (\textit{GNNa-ed}) achieves comparable errors around $\num{2e-5}$ the lowest tracking error across all cases, demonstrating strong generalization to different graph sizes. However, \textit{GNNa-ed}) results marginally slower to converge. We believe this is due to the compromise between accuracy and convergence imposed by the loss function~\eqref{eq:loss}. \textit{GNN}, without attention, exhibits significantly higher errors and variance, indicating sensitivity to graph structure and limited adaptability to varying node cardinalities. This behaviour confirms that the attention mechanisms play a crucial role in enhancing robustness against distribution shifts. \textit{GNN-ed} reduces communication costs at the expense of accuracy, resulting in an error ${\num{2e-4}}$ higher than \textit{GNN}. Regarding model-based approaches, \textit{PI-ACE} and \textit{R-ACE} exhibit asymptotic convergence to zero error, confirming their ability to track static signals given sufficient time. However, they show also strong dependency on the graph sizes and structure as confirmed by their high standard deviations. In contrast, \textit{PP-ACE} fails to provide accurate estimations since it is only effective for signals with bounded derivatives.

\subsection{High Frequencies Signals}
\label{sec:high_frequencies_signals}
\begin{figure*}[t]
	\begin{subfigure}[b]{0.325\textwidth}
		\centering
		\includegraphics[width=\textwidth, height=0.7\textwidth]{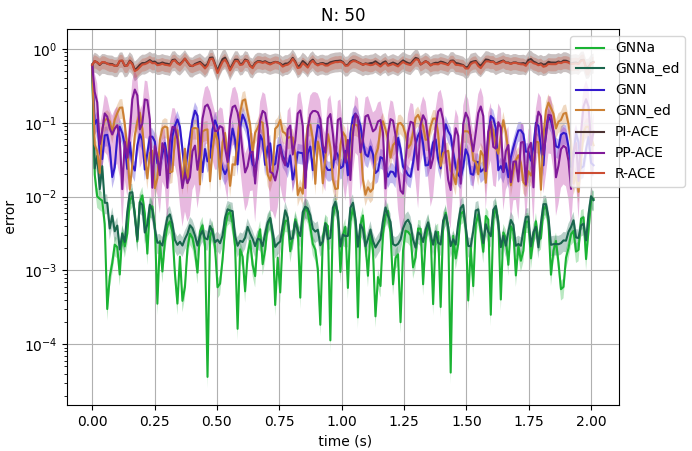}
		\caption{error for graph size of N=50}
		\label{fig:20hz_signals_50}
	\end{subfigure}
	\begin{subfigure}[b]{0.325\textwidth}
		\centering
		\includegraphics[width=\textwidth,height=0.7\textwidth]{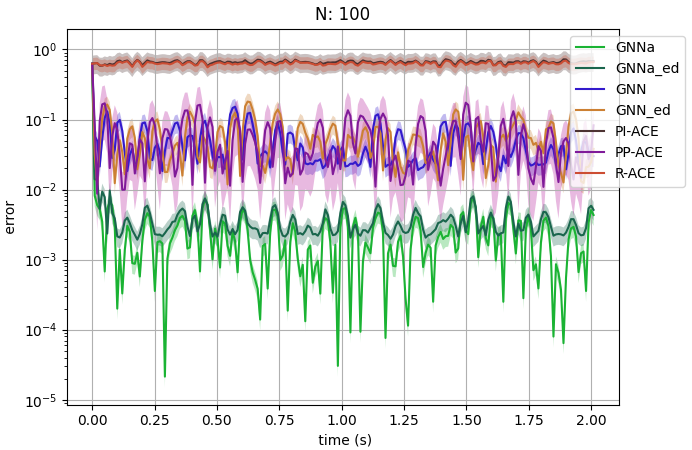}
		\caption{error for graph size of N=100}
		\label{fig:20hz_signals_100}
	\end{subfigure}
	\begin{subfigure}[b]{0.325\textwidth}
		\centering
		\includegraphics[width=\textwidth,height=0.7\textwidth]{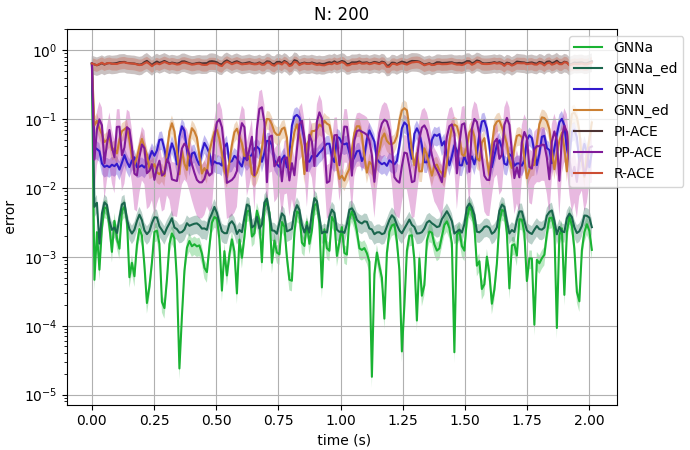}
		\caption{error for graph size of N=200}
		\label{fig:20hz_signals_200}
	\end{subfigure}
	\caption{\textbf{High frequencies signals}. Mean and standard deviation error of the average estimation on $1000$ graphs and $20$Hz signals for (a) $50$ agents, (b) $100$ agents, and (c) $200$ agents, applying GNN with attention mechanism (\textit{GNNa}), GNNa adding encoding-decoding (\textit{GNNa-ed}), GNN with left-normalized Laplacian (\textit{GNN}), \textit{GNN} adding encoding-decoding (\textit{GNN-ed}), \textit{PI-ACE}, \textit{PP-ACE}, and \textit{R-ACE}.}
	\label{fig:20hz_signals}
\end{figure*}
\begin{figure*}[t]
	\begin{subfigure}[b]{0.325\textwidth}
		\centering
		\includegraphics[width=\textwidth]{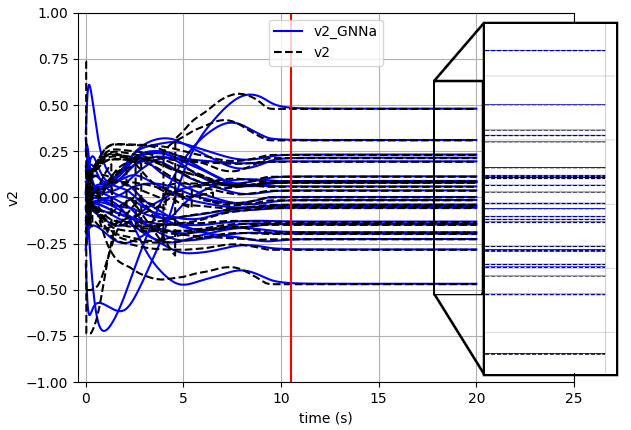}
		\caption{$v_2$ tracking vector with \textit{GGNa} model}
		\label{fig:connectivity_La}
	\end{subfigure}
	\begin{subfigure}[b]{0.325\textwidth}
		\centering
		\includegraphics[width=\textwidth]{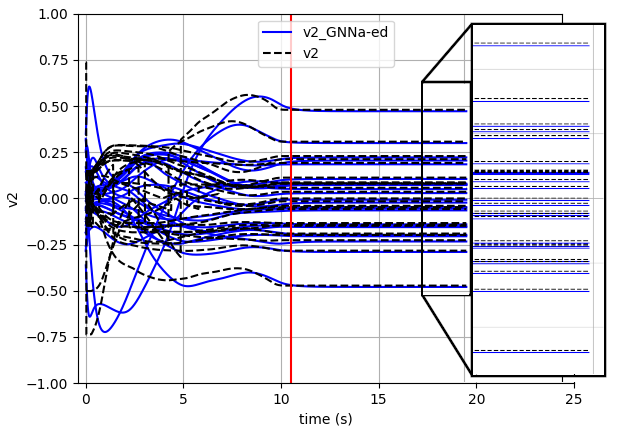}
		\caption{$v_2$ tracking vector with \textit{GGNa-ed} model}
		\label{fig:connectivity_Ln}
	\end{subfigure}
	\begin{subfigure}[b]{0.325\textwidth}
		\centering
		\includegraphics[width=\textwidth]{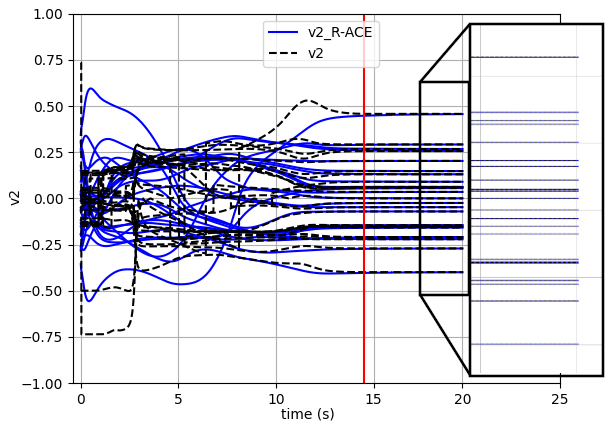}
		\caption{$v_2$ tracking vector with \textit{R-ACE} algorithm}
		\label{fig:connectivity_Pi}
	\end{subfigure}
	\caption{\textbf{Connectivity Maintenance}. Laplacian second eigenvector $v2$ estimated during the connectivity control thanks to the average estimation, using (a) \textit{GNN} with attention mechanism (\textit{GNNa}), (b) \textit{GNNa} adding encoding-decoding mechanism (\textit{GNNa-ed}),  and (c) \textit{R-ACE}.}
	\label{fig:connectivity_control}
\end{figure*}
\begin{figure*}[t]
	\begin{subfigure}[b]{0.325\textwidth}
		\centering
		\includegraphics[width=\textwidth]{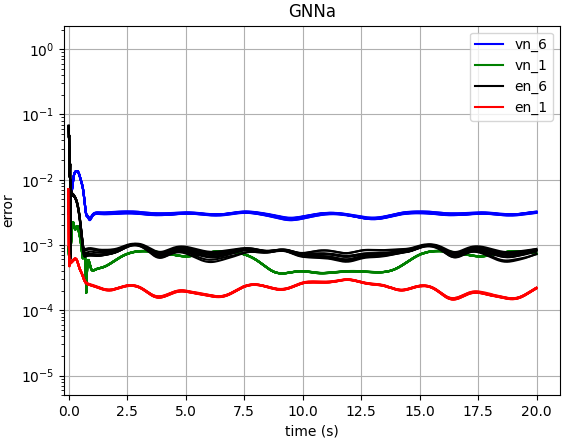}
		\caption{tracking error for \textit{GNNa}}
		\label{fig:formation_a}
	\end{subfigure}
	\begin{subfigure}[b]{0.325\textwidth}
		\centering
		\includegraphics[width=\textwidth]{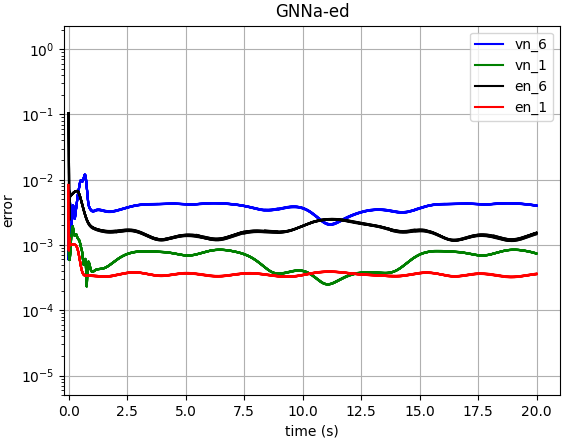}
		\caption{tracking error for \textit{GNNa-ed}}
		\label{fig:formation_n}
	\end{subfigure}
	\begin{subfigure}[b]{0.325\textwidth}
		\centering
		\includegraphics[width=\textwidth]{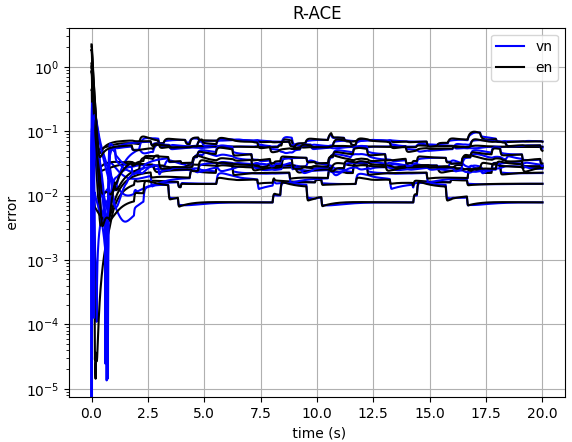}
		\caption{tracking error for \textit{R-ACE}}
		\label{fig:formation_pi_ace}
	\end{subfigure}
	\caption{\textbf{Formation Control}. Error in logarithmic scale for the estimation of the average velocity $vn$ and the average agent-target distance $en$. The figures show the errors with 6-dimensional and 1-dimensional input features by using (a) \textit{GNN} with attention mechanism (\textit{GNNa}), (b) \textit{GNNa} adding encoding-decoding mechanism (\textit{GNNa-ed}),  and (c) \textit{R-ACE}.}
	\label{fig:formation-control}
\end{figure*}
For this set of experiments, we assign $20$Hz sinusoidal signals to the agents, with amplitudes ranging between $[-2,2]$ and phases within $[-\pi, \pi]$. We evaluate the average performance by testing all methods on 1000 randomly generated graphs of sizes $[50,100,200]$ using the Barabási-Albert method, ensuring a different graph distribution from training, where we use the Erdős-Rényi method. As shown in Figure~\ref{fig:20hz_signals}, the neural network with attention mechanism (\textit{GNNa}) and encoding-decoding (\textit{GGNa-ed}) achieve the best tracking performance among all tested methods, with \textit{GNNa} exhibiting the lowest tracking error. Specifically, \textit{GNNa} performs well with an average error around ${\num{1.5e-3}}$, reaching a minimum error of ${\num{2e-5}}$ across different graph sizes. The primary distinction between \textit{GNNa} and \textit{GGNa-ed} is that \textit{GGNa-ed} maintains a more stable error around ${\num{3.5e-3}}$, whereas \textit{GNNa} has slightly higher consensus error despite achieving a lower average tracking error. This difference becomes more evident in subsequent examples. \textit{GNN} performs comparably to \textit{PP-ACE}, with an average error of $5e-2$ and a high standard deviation. \textit{PP-ACE} attempts to impose prescribed transient performance using a performance function, which can make the estimation numerically unstable, leading to challenges in tuning the function. Additionally, its performance depends on the signal derivatives, making \textit{PP-ACE} less effective for static or slow-varying signals. \textit{GNN} with encoding-decoding (\textit{GNN-ed}) achieves a similar error to a standard \textit{GNN}, with the advantage of communicating fewer variables due to the encoder. As expected, \textit{PI-ACE} and \textit{R-ACE} exhibit the worst performance, as they are not designed to handle high-frequency signals and varying graph sizes.

\subsection{Connectivity Maintenance}   
\label{sec:connectivity_maintenance} We used the algorithm in~\cite{C-Freeman2010ConnectivityEstimation} to estimate the connectivity of a group of agents moving with single integrator dynamics and controlled with the algorithm in~\cite{robuffo2013passivity} to enforce their general connectivity. The connectivity estimation algorithm requires communication while the control acts based on the actual knowledge of the connectivity without further communication. The algorithm depicted in~\cite{C-Freeman2010ConnectivityEstimation} iteratively estimates the Laplacian second eigenvector ($v_2$) and its eigenvalue that measures the connectivity degree of the graph by exploiting two average estimations. In this respect, we tested \textit{GNNa-ed}, \textit{GNNa} and \textit{R-ACE} by initializing with the same settings the connectivity estimation and posing $25$ agents in the same positions, in order to isolate the different contributions of the three estimations. Note that we instantiate twice the estimators since they work with a mono-dimensional input. For the sake of brevity, we show here only the \textit{R-ACE}, since it has a better performance compared to \textit{PI-ACE} and \textit{PP-ACE}. We reported the results of these latter in the Appendix. In Figure \ref{fig:connectivity_control}, we show the resulting tracking of $v_2$ while the team uses its estimation to increase the connectivity. As we can see, the \textit{GNN}-based approaches converge $3.0s$ earlier than \textit{R-ACE}, even if they do not show asymptotically convergence as \textit{R-ACE}. However, we can say that they achieve close-to-zero error, with a mean steady-state tracking error of \num{2.5e-2} for {\textit{GNNa}} and \num{4.1e-2} for \textit{GNNa-ed}. Faster convergence is due to the regularization term in~\eqref{eq:loss} which enforces the state fastest convergence possible for an accepted low training error. 
\subsection{Formation Control}
\label{sec:formation_control} Finally, we evaluated the average estimation performances under multiple-dimensional signals in the case of formation control. We considered $10$ double-integrator agents moving freely in space and with linear acceleration inputs $\bm{u}$ saturated in the range of $[-10, \hspace{0.1cm} 10 ] \hspace{0.1cm} m/s^2 $. The formation is controlled to be centred above a moving target to track. Additionally, each agent follows a prescribed velocity pattern $\bm{v_{ti}}$. A centralized control for the agent $i$ can be formulated as:
\begin{equation}
 \bm{u_i} = (\bm{v_{ti}} - \bm{v_i}) - \frac{K_v}{N}\bm{1}\bm{v} - \frac{K_p}{N} \bm{1}\bm{e} 
	\label{eq:centralized-control}
\end{equation}
where $K_v, K_p > 0 $ are controller gains, $\bm{e}$ are the relative positions of the target from the agents and $\bm{v}$ are the agents velocities. We used $K_v=10$, $K_p=14$. $\bm{e}$ and $\bm{v}$ averages in~\eqref{eq:centralized-control} can be computed in a decentralized way by dynamic average estimation of $6$-D vector formed by stacking the agent $3$-D velocity and $3$-D target-agent distance. The vector is entirely delivered to the neural networks trained for an input size of $6$ and an output giving the average for each input vector entry. 

We tested the proposed approach with a circular target trajectory around the origin, with a radius of $5m$ and $0.5 rad/s$, angular frequency, with a $\bm{v_{ti}}$ a planar circle with random magnitudes and phases different for each agent. We compared the average $\bm{v}$ and the average $\bm{e}$ estimation error resulting from \textit{GNNa}, \textit{GNNa-ed} and the \textit{R-ACE}. We also enclosed \textit{PI-ACE} and \textit{PP-ACE} results in the Appendix. We reported the performances of $6$-D and $1$-D input neural networks. \textit{GNNa} estimation error results the lowest among the three methods even if similar to \textit{GNNa-ed}. The $6$-D neural networks perform worst in terms of accuracy than $1$-D \textit{GNNa} and \textit{GNNa-ed} with an error of about an order of magnitude more. The increased error was expected since more information must be encoded in the same number of communicated variables. However, \textit{R-ACE} shows the biggest error compared to $1$-D and $6$-D neural networks, attesting between \num{7e-3} and \num{8.2e-3}. In particular, \textit{R-ACE} slow convergence is visible at the beginning of the experiment where the agents are far from the target. As we anticipated in the Section~\ref{sec:high_frequencies_signals}, \textit{GNNa} has a higher consensus error than \textit{GNNa-ed} which is particularly visible on the $6$-D $\bm{e}$. In the \textit{GNNa-ed} case, the attention mechanism is computed on the encoded features which are the real variables to communicate. Since these latter are less numerous than \textit{GNNa} case, the training process is able to optimize better the support matrix weights thanks to the attention mechanism. Additionally, \textit{R-ACE} shows a larger consensus error than the GNNs based methods. The smoothing effect of graph filters explains this last fact. 
\section{Conclusions}
\label{conclusions}
This paper introduces a novel approach using a gated graph neural network (GNN) to estimate the dynamic average of signals distributed among a group of agents. Traditional methods often struggle with slow convergence rates and limited scalability when dealing with large graphs or high-speed signal changes. Our approach mitigates these issues by integrating a stability condition into the training process of the neural network, thereby enhancing the convergence rate. This integration allows the system to maintain high performance across various scenarios, including those involving rapidly changing signals and complex network topologies. By direct comparisons, we demonstrated that our approach outperforms the baselines in terms of both convergence rate and, in certain cases, precision for high-speed signals, connectivity, and formation control, regardless of the size of the graph. One notable limitation of our approach is the increased communication overhead. The method requires a relatively high number of variables to be exchanged among agents, which can be seen as a drawback compared to more communication-efficient techniques. Despite this, the superior performance in terms of convergence rate and precision makes our method a promising candidate for further research. Future work will focus on optimizing the communication requirements, aiming to maintain the performance benefits while reducing the associated overhead. This optimization will be crucial for practical applications where communication bandwidth is limited or expensive.

%%
%% The acknowledgments section is defined using the "acks" environment
%% (and NOT an unnumbered section). This ensures the proper
%% identification of the section in the article metadata, and the
%% consistent spelling of the heading.
\begin{acks}
This work was supported by the ANR-20-CHIA-0017 project ``MULTISHARED''.
\end{acks}

%%
%% The next two lines define the bibliography style to be used, and
%% the bibliography file.
\bibliographystyle{ACM-Reference-Format}
\bibliography{bibliography}

@article{ruiz2020gated,
  title={Gated graph recurrent neural networks},
  author={Ruiz, Luana and Gama, Fernando and Ribeiro, Alejandro},
  journal={IEEE Transactions on Signal Processing},
  volume={68},
  pages={6303--6318},
  year={2020},
  
}

@inproceedings{stamouli2019robust,
  title={Robust dynamic average consensus with prescribed performance},
  author={Stamouli, Charis J and Bechlioulis, Charalampos P and Kyriakopoulos, Kostas J},
  booktitle={2019 IEEE 58th Conference on Decision and Control (CDC)},
  pages={5420--5425},
  year={2019},
  
}

@article{george2019robust,
  title={Robust dynamic average consensus algorithms},
  author={George, Jemin and Freeman, Randy A},
  journal={IEEE Transactions on Automatic Control},
  volume={64},
  number={11},
  pages={4615--4622},
  year={2019},
  
}

@inproceedings{iancu2021towards,
  title={Towards finite-time consensus with graph convolutional neural networks},
  author={Iancu, Bianca and Isufi, Elvin},
  booktitle={2020 28th European Signal Processing Conference (EUSIPCO)},
  pages={2145--2149},
  year={2021},
  
}

@article{porfiri2007tracking,
  title={Tracking and formation control of multiple autonomous agents: A two-level consensus approach},
  author={Porfiri, Maurizio and Roberson, D Gray and Stilwell, Daniel J},
  journal={Automatica},
  volume={43},
  number={8},
  pages={1318--1328},
  year={2007},
  publisher={Elsevier}
}

@inproceedings{olfati2005distributed,
  title={Distributed Kalman filter with embedded consensus filters},
  author={Olfati-Saber, Reza},
  booktitle={Proceedings of the 44th IEEE Conference on Decision and Control},
  pages={8179--8184},
  year={2005},
  
}

@inproceedings{sandryhaila2014finite,
  title={Finite-time distributed consensus through graph filters},
  author={Sandryhaila, Aliaksei and Kar, Soummya and Moura, Jos{\'e} MF},
  booktitle={2014 IEEE International Conference on Acoustics, Speech and Signal Processing (ICASSP)},
  pages={1080--1084},
  year={2014},
  
}

@inproceedings{freeman2006stability,
  title={Stability and convergence properties of dynamic average consensus estimators},
  author={Freeman, Randy A and Yang, Peng and Lynch, Kevin M},
  booktitle={Proceedings of the 45th IEEE Conference on Decision and Control},
  pages={338--343},
  year={2006},
  
}

@inproceedings{spanos2005dynamic,
  title={Dynamic consensus on mobile networks},
  author={Spanos, Demetri P and Olfati-Saber, Reza and Murray, Richard M},
  booktitle={IFAC world congress},
  pages={1--6},
  year={2005}
}

@article{xu2021robust,
  title={Robust finite-time dynamic average consensus with exponential convergence rates},
  author={Xu, Kedong and Gao, Lan and Chen, Fei and Li, Chaojie and Xuan, Qi},
  journal={IEEE Transactions on Circuits and Systems II: Express Briefs},
  volume={68},
  number={7},
  pages={2578--2582},
  year={2021},
  
}

@article{nosrati2012dynamic,
  title={Dynamic average consensus via nonlinear protocols},
  author={Nosrati, Shahram and Shafiee, Masoud and Menhaj, Mohammad Bagher},
  journal={Automatica},
  volume={48},
  number={9},
  pages={2262--2270},
  year={2012},
  publisher={Elsevier}
}

@inproceedings{chen2013robust,
  title={Robust distributed average tracking for coupled general linear systems},
  author={Chen, Fei and Ren, Wei},
  booktitle={Proceedings of the 32nd Chinese Control Conference},
  pages={6953--6958},
  year={2013},
  
}

@article{chen2012distributed,
  title={Distributed average tracking of multiple time-varying reference signals with bounded derivatives},
  author={Chen, Fei and Cao, Yongcan and Ren, Wei},
  journal={IEEE Transactions on Automatic Control},
  volume={57},
  number={12},
  pages={3169--3174},
  year={2012},
  
}

@article{d2022incremental,
  title={An incremental input-to-state stability condition for a generic class of recurrent neural networks},
  author={D'Amico, William and La Bella, Alessio and Farina, Marcello},
  journal={arXiv preprint arXiv:2210.09721},
  year={2022}
}

@inproceedings{wang2011control,
  title={A control perspective for centralized and distributed convex optimization},
  author={Wang, Jing and Elia, Nicola},
  booktitle={2011 50th IEEE conference on decision and control and European control conference},
  pages={3800--3805},
  year={2011},
  
}

@inproceedings{listmann2009consensus,
  title={Consensus for formation control of nonholonomic mobile robots},
  author={Listmann, Kim D and Masalawala, Mohanish V and Adamy, Jurgen},
  booktitle={2009 IEEE international conference on robotics and automation},
  pages={3886--3891},
  year={2009},
  
}

@article{zhu2011distributed,
  title={On distributed convex optimization under inequality and equality constraints},
  author={Zhu, Minghui and Martinez, Sonia},
  journal={IEEE Transactions on Automatic Control},
  volume={57},
  number={1},
  pages={151--164},
  year={2011},
  
}

@article{wang2017diffusion,
  title={Diffusion distributed Kalman filter over sensor networks without exchanging raw measurements},
  author={Wang, Guoqing and Li, Ning and Zhang, Yonggang},
  journal={Signal Processing},
  volume={132},
  pages={1--7},
  year={2017},
  publisher={Elsevier}
}

@article{kia2019tutorial,
  title={Tutorial on dynamic average consensus: The problem, its applications, and the algorithms},
  author={Kia, Solmaz S and Van Scoy, Bryan and Cortes, Jorge and Freeman, Randy A and Lynch, Kevin M and Martinez, Sonia},
  journal={IEEE Control Systems Magazine},
  volume={39},
  number={3},
  pages={40--72},
  year={2019},
  
}

@article{erdHos1960evolution,
  title={On the evolution of random graphs},
  author={Erd{\H{o}}s, Paul and R{\'e}nyi, Alfr{\'e}d and others},
  journal={Publ. math. inst. hung. acad. sci},
  volume={5},
  number={1},
  pages={17--60},
  year={1960}
}

@article{marino2024input,
	title={Input State Stability of Gated Graph Neural Networks},
	author={Marino, Antonio and Pacchierotti, Claudio and Giordano, Paolo Robuffo},
	journal={IEEE Transactions on Control of Network Systems},
	pages={1--12},
	year={2024}
}

@inproceedings{bayer2013discrete,
  title={Discrete-time incremental ISS: A framework for robust NMPC},
  author={Bayer, Florian and B{\"u}rger, Mathias and Allg{\"o}wer, Frank},
  booktitle={2013 European Control Conference (ECC)},
  pages={2068--2073},
  year={2013},
  
}

@article{li2021message,
  title={Message-aware graph attention networks for large-scale multi-robot path planning},
  author={Li, Qingbiao and Lin, Weizhe and Liu, Zhe and Prorok, Amanda},
  journal={IEEE Robotics and Automation Letters},
  volume={6},
  number={3},
  pages={5533--5540},
  year={2021},
  
}

@inproceedings{tolstaya2021multi,
  title={Multi-robot coverage and exploration using spatial graph neural networks},
  author={Tolstaya, Ekaterina and Paulos, James and Kumar, Vijay and Ribeiro, Alejandro},
  booktitle={2021 IEEE/RSJ International Conference on Intelligent Robots and Systems (IROS)},
  pages={8944--8950},
  year={2021}
}

@inproceedings{gama2021graph,
  title={Graph neural networks for decentralized controllers},
  author={Gama, Fernando and Tolstaya, Ekaterina and Ribeiro, Alejandro},
  booktitle={ICASSP 2021-2021 IEEE International Conference on Acoustics, Speech and Signal Processing (ICASSP)},
  pages={5260--5264},
  year={2021},
  
}

@inproceedings{I-Malli2021CE,
  title={Robust Distributed Estimation of the Algebraic Connectivity for Networked Multi-robot Systems},
  author={Malli, Ioanna and Bechlioulis, Charalampos P and Kyriakopoulos, Kostas J},
  booktitle={2021 IEEE International Conference on Robotics and Automation (ICRA)},
  pages={9155--9160},
  year={2021},
  
}

@incollection{I-Lingfei2022GNNFoundations,
  title={Graph neural networks},
  author={Wu, Lingfei and Cui, Peng and Pei, Jian and Zhao, Liang and Song, Le},
  booktitle={Graph Neural Networks: Foundations, Frontiers, and Applications},
  pages={27--37},
  year={2022},
  publisher={Springer}
}

@article{P-Shuman2013SPG,
  title={The emerging field of signal processing on graphs: Extending high-dimensional data analysis to networks and other irregular domains},
  author={Shuman, David I and Narang, Sunil K and Frossard, Pascal and Ortega, Antonio and Vandergheynst, Pierre},
  journal={IEEE Signal Processing Magazine},
  volume={30},
  number={3},
  pages={83--98},
  year={2013},
  
}

@article{robuffo2013passivity,
  title={A passivity-based decentralized strategy for generalized connectivity maintenance},
  author={{Robuffo Giordano}, Paolo and Franchi, Antonio and Secchi, Cristian and B{\"u}lthoff, Heinrich H},
  journal={The International Journal of Robotics Research},
  volume={32},
  number={3},
  pages={299--323},
  year={2013},
  publisher={SAGE Publications Sage UK: London, England}
}

@article{C-Freeman2010ConnectivityEstimation,
  title={Decentralized estimation and control of graph connectivity for mobile sensor networks},
  author={Yang, Peng and Freeman, Randy A and Gordon, Geoffrey J and Lynch, Kevin M and Srinivasa, Siddhartha S and Sukthankar, Rahul},
  journal={Automatica},
  volume={46},
  number={2},
  pages={390--396},
  year={2010}
}

%%
%% If your work has an appendix, this is the place to put it.
\appendix

\section{Proof Theorem~\ref{dISS_ed_stab}}
\label{proof_theorem1}
\begin{proof}
	To prove the assertion is sufficient to compute the superior extreme of the difference between two states $\bm{x}_1$ and $\bm{x}_2$. First, we name $ A_{edS},B_{edS},\hat{A}_{edS},\tilde{A}_{edS},\hat{B}_{edS},\tilde{B}_{edS}$ the graph filters with encoder-decoder communication. In light of assumption~\ref{assumption1}, the state gate can be bounded by
	\begin{equation*}
		\begin{aligned}
			|| \bm{\hat{q}} ||_{\infty} & \leq \max_{u \in \mathcal{U}, x \in \mathcal{X} } || \sigma(\hat{A}_{edS}(\bm{x}) + \hat{B}_{edS}(\bm{u}) + \hat{b})||_\infty \\
			& \leq \sigma((KD_x||S||_{\infty}(E_x + Eb_x) + Db_x)||\hat{A}||_{\infty} + (KD_u||S||_{\infty}(E_u + Eb_u) + Db_u)||\hat{B}||_{\infty}+\hat{b})=\sigma_{\hat{q}ed}
		\end{aligned}
	\end{equation*}
	In the same way, the input gate is bounded by
	\begin{equation*}
		\begin{aligned}
			|| \bm{\tilde{q}}||_{\infty} & \leq \sigma((KD_x||S||_{\infty}(E_x + Eb_x) + Db_x)||\tilde{A}||_{\infty} + (KD_u||S||_{\infty}(E_u + Eb_u) + Db_u)||\tilde{B}||_{\infty}+\tilde{b}) = \sigma_{\tilde{q}ed}
		\end{aligned}
	\end{equation*}
	By appling the subadditivity property of the infinite norm, the update of the states $\bm{x}_1$ and $\bm{x}_2$ lead to the following norm inequality 
	\begin{equation}
		\begin{aligned}
			&||\bm{x_1}^+ - \bm{x_2}^+||_{\infty} \leq  \\ & ||\hat{q}_1 \circ (A_{edS1}(\bm{x_1}) - A_{edS2}(\bm{x_2}))||_{\infty} + ||(\hat{q}_1 - \hat{q}_2) \circ A_{edS2}(\bm{x_2})||_{\infty} +\\ &  ||\tilde{q}_1 \circ (B_{edS1}(\bm{u_1}) - B_{edS2}(\bm{u_2}))||_{\infty} + ||(\tilde{q}_1 - \tilde{q}_2) \circ B_{edS2}(\bm{u_2})||_{\infty}
		\end{aligned}
		\label{eq:state_diff}
	\end{equation}
	We want to emphasize that, based on the assumptions regarding the non-linearities in encoding and decoding, the encoding/decoding functions exhibit Lipschitz continuity with a Lipschitz constant of $1$. Then, under the assumptions of theorem~\ref{dISS_ed_stab}, i.e. ${ ||S_{K1}||_{\infty},||S_{K2}||_{\infty} \leq ||\bar{S}_{K}||_{\infty}}$, the first term of the right-hand side satisfies the following
	\begin{equation}
		\begin{aligned}
			||\hat{q}_1 \circ & (A_{edS1}(\bm{x_1}) - A_{edS2}(\bm{x_2}))||_{\infty} \leq  \sigma_{\hat{q}ed}||A||_{\infty}||[d_{x\theta}(e_{x\theta}(\bm{x_1})), \dots, d_{x\theta}(S^K e_{x\theta}(\bm{x_1}))] \\ & -  [d_{x\theta}(e_{x\theta}(\bm{x_2})) \dots, d_{x\theta}(S^K e_{x\theta}(\bm{x_2}))]||_{\infty} \\ & \leq \sigma_{\hat{q}ed} ||A||_{\infty} K D_x || S_{K1} ( I_{K} \otimes e_{x\theta}(\bm{x_1}) - I_{K} \otimes e_{x\theta}(\bm{x_2)}) + (S_{K1} - S_{K2})(I_{K} \otimes e_{x\theta}(\bm{x_2)})||_{\infty} \\ & \leq  
			\sigma_{\hat{q}ed} ||A||_{\infty} K D_x (|| \bar{S}_{K}||_{\infty} E_x || \bm{x_1} - \bm{x_2}||_{\infty} + ||S_{K1} - S_{K2}||_{\infty}(E_x||\bm{x}_2||_{\infty}+ Eb_x))  \\ & \leq \sigma_{\hat{q}ed} ||A||_{\infty} K D_x (|| \bar{S}_{K}||_{\infty} E_x || \bm{x_1} - \bm{x_2}||_{\infty} + ||S_{K1} - S_{K2}||_{\infty}(E_x + Eb_x))
		\end{aligned}
		\label{eq:state_ineq}
	\end{equation}
	Applying the same reasoning to the other terms in the inequality~\eqref{eq:state_diff}, we get
	\begin{equation*}
		\begin{aligned}
			||\bm{x_1}^+ - \bm{x_2}^+||_{\infty} & \leq K D_x E_x ||\bar{S}_K||_\infty(\sigma_{\hat{q}ed} ||A||_{\infty}  +\frac{1}{4}(||\hat{A}||_{\infty} ||A||_{\infty}(K D_x||\bar{S}_K||_\infty (E_x + Eb_x) + Db_x) \\
			& + ||\tilde{A}||_{\infty} ||B||_{\infty}(K D_u||\bar{S}_K||_\infty (E_u + Eb_u) + Db_u))) ||\bm{x_1} - \bm{x_2}||_{\infty} + \mathcal{W}_{ed}||S_{K1}-S_{K2}||_{\infty} \\ & +  K D_u E_u ||\bar{S}_K||_\infty ( \sigma_{\tilde{q}ed}||B||_{\infty} +\frac{1}{4}(||\hat{B}||_{\infty}||A||_{\infty}(K D_x||\bar{S}_K||_\infty (E_x + Eb_x) + Db_x) \\ & + ||\tilde{B}||_{\infty}||B||_{\infty}(K D_u||\bar{S}_K||_\infty (E_u + Eb_u)+  Db_u)))||\bm{u_1} - \bm{u_2}||_{\infty} \\ & \leq 
			\delta \mathcal{A}_{ed} ||\bm{x_1} - \bm{x_2}||_{\infty} + \delta \mathcal{B}_{ed} ||\bm{u_1} - \bm{u_2}||_{\infty} +  \mathcal{W}_{ed}||S_{K1}-S_{K2}||_{\infty}
		\end{aligned} 
	\end{equation*}
	where $ \mathcal{W}_{ed}$ gathers all the coefficient multiplying the difference $ ||S_{K1}-S_{K2}||_{\infty}$. We can consider this latter as an additional bounded input, $ ||S_{K1} - S_{K2}||_{\infty} \leq ||\bar{S}_{K}||_{\infty} - 1$ which is defined by the team state. Hence, by iterating~\eqref{eq:state_ineq} for $ t$ steps, it holds 
	\begin{equation}
		\begin{aligned}
			||\bm{x_1}(t) &- \bm{x_2}(t)||_{\infty} \leq \delta \mathcal{A}^t_{ed} ||\bm{x_1}(0) - \bm{x_2}(0)||_{\infty} + (1-\delta \mathcal{A}_{ed})^{-1} \delta \mathcal{B}_{ed} ||\bm{u_1} - \bm{u_2}||_{\infty} + \\ & (1-\delta \mathcal{A}_{ed})^{-1} \mathcal{W}_{ed}||S_{K1}-S_{K2}||_{\infty}
		\end{aligned}
		\label{eq:incremental_trajectories}
	\end{equation}
	which satisfies the incrementally ISS under $  \delta \mathcal{A}_{ed} < 1$.
\end{proof}

\section{Additional Results}
We reported in Figures~\ref{fig:connectivity_control-pp} and \ref{fig:formation-pp} the results for connectivity maintenance and formaction control problems using \textit{PI-ACE} and \textit{PP-ACE} algorithms. The results of \textit{PI-ACE} are clearly very similar to \textit{R-ACE}, given the similarities of these two estimators, with worst performaces of \textit{PI-ACE} in terms of convergence speed and error. On the contrary, \textit{PP-ACE} seams not suited not suitable for practicale applications as shown by its worst performaces. This result is in contrast on what is shown by the authors of the algorithm who used the \textit{PP-ACE} to estimated the connectivity and control the team to mantain the connectivity of the team~\cite{I-Malli2021CE}. However, the authors in their experiments use high gains and a stepest prescribed function thanks to the numerical integrations used and instantaneous communication. In practice, we add to limit the gains and the function to make this algorithm feasable in a more realistic scenario. 
    
\begin{figure*}[t]
	\begin{subfigure}[b]{0.45\textwidth}
		\centering
		\includegraphics[width=\textwidth]{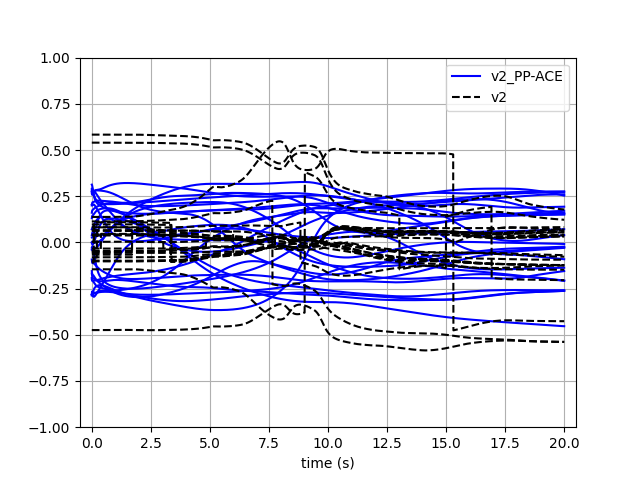}
		\caption{$v_2$ tracking vector with \textit{PP-ACE} model}
	\end{subfigure}
	\begin{subfigure}[b]{0.45\textwidth}
		\centering
		\includegraphics[width=\textwidth]{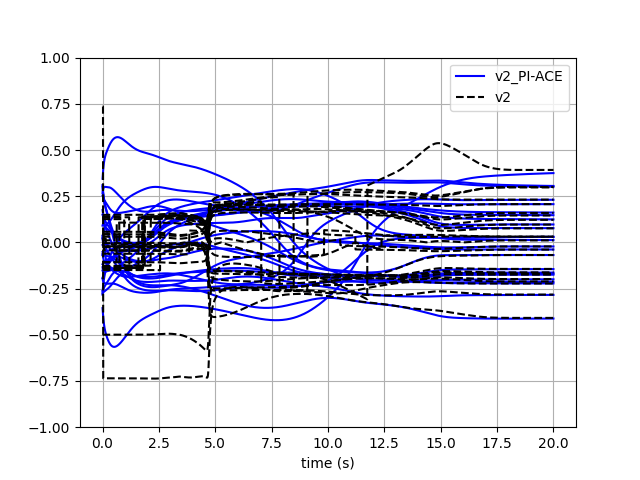}
		\caption{$v_2$ tracking vector with \textit{PI-ACE} model}
	\end{subfigure}
	\caption{\textbf{Connectivity Maintenance}. Laplacian second eigenvector $v2$ estimated during the connectivity control thanks to the average estimation, using (a) \textit{PP-ACE} and (b) \textit{PI-ACE}.}
	\label{fig:connectivity_control-pp}
\end{figure*}

\begin{figure*}[t]
	\begin{subfigure}[b]{0.45\textwidth}
		\centering
		\includegraphics[width=\textwidth]{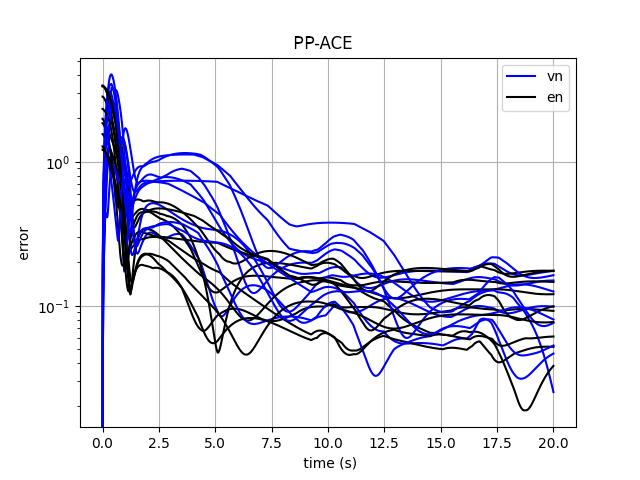}
		\caption{tracking error for \textit{PP-ACE}}
	\end{subfigure}
	\begin{subfigure}[b]{0.45\textwidth}
		\centering
		\includegraphics[width=\textwidth]{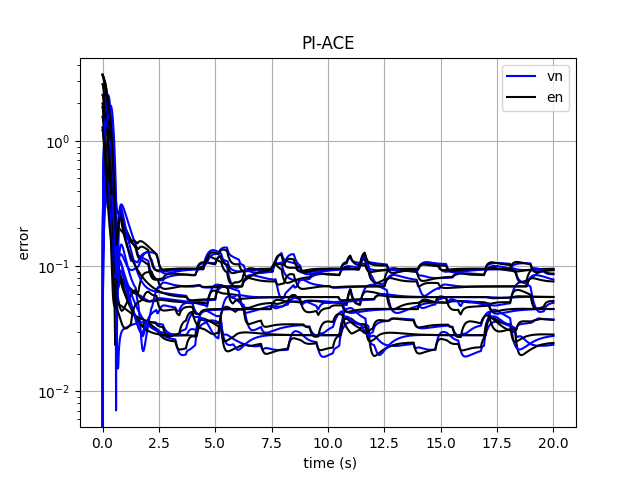}
		\caption{tracking error for \textit{PI-ACE}}
	\end{subfigure}
	\hfill
	\caption{\textbf{Formation Control}. Error in logarithmic scale for the estimation of the average velocity $vn$ and the average agent-target distance $en$. The figures show the errors with 6-dimensional and 1-dimensional input features by using (a) the \textit{PP-ACE} and (b) \textit{PI-ACE}.}
	\label{fig:formation-pp}
\end{figure*}

\end{document}